\documentclass[aps,prx,reprint,amsmath,amssymb,longbibliography]{revtex4-2}

\usepackage{graphicx}   
\usepackage{bm}         
\usepackage{amsfonts}
\usepackage{amsthm}     
\usepackage{comment}    
\usepackage{booktabs}   
\usepackage[dvipsnames,table]{xcolor}
\usepackage{colortbl}   
\usepackage{hyperref}   
\usepackage{dcolumn}
\usepackage{float}

\usepackage{booktabs}       
\usepackage{amsfonts}       
\usepackage{nicefrac}       
\usepackage{microtype}      

\usepackage{accents}
\usepackage{mathtools}
\usepackage{graphicx}
\usepackage{ragged2e}
\usepackage{subcaption}
\usepackage{siunitx}

\usepackage{tcolorbox}
\usepackage{multirow}
\usepackage{cancel}
\usepackage{wrapfig}
\usepackage{footmisc}
\usepackage[normalem]{ulem}

\usepackage{hyperref}
\definecolor{tumblue}{RGB}{0, 101, 189}
\hypersetup{
   colorlinks=true,
   citecolor=tumblue,
   linkcolor=tumblue,
   filecolor=tumblue,
   urlcolor=tumblue,
 }
 \usepackage[capitalize]{cleveref}
\usepackage{algorithm}
\usepackage{algpseudocode}
\algrenewcommand\algorithmicrequire{\textbf{Input:}}
\algrenewcommand\algorithmicensure{\textbf{Output:}}
\algnewcommand\Parameters{\item[\textbf{Parameters:}]}

\hypersetup{
  colorlinks=true,
  linkcolor=blue,
  urlcolor=cyan,
  citecolor=blue
}

\newtheorem{theorem}{Theorem}[section]

\newtheorem{definition}[theorem]{Definition}
\newtheorem{lemma}[theorem]{Lemma}

\DeclareMathOperator{\sgn}{sgn}
\DeclareMathOperator{\supp}{supp}
\DeclareMathOperator{\argmin}{arg\,min}

\begin{document}

\title{Learning with Boolean threshold functions}

\author{Veit Elser}
\thanks{ve10@cornell.edu}
\affiliation{Department of Physics, Cornell}
\author{Manish Krishan Lal}
\thanks{manish.krishanlal@gmail.com}
\affiliation{Mathematics, Independent Researcher, India}

\date{\today}

\begin{abstract}
We develop a method for training neural networks on Boolean data in which the values at all nodes are strictly $\pm 1$, and the resulting models are typically equivalent to networks whose nonzero weights are also $\pm 1$. The method replaces loss minimization with a nonconvex constraint formulation. Each node implements a Boolean threshold function (BTF), and training is expressed through a divide-and-concur decomposition into two complementary constraints: one enforces local BTF consistency between inputs, weights, and output; the other imposes architectural concurrence, equating neuron outputs with downstream inputs and enforcing weight equality across training-data instantiations of the network. The reflect–reflect–relax (RRR) projection algorithm is used to reconcile these constraints.

Each BTF constraint includes a lower bound on the margin. When this bound is sufficiently large, the learned representations are provably sparse and equivalent to networks composed of simple logical gates with $\pm 1$ weights. Across a range of tasks—including multiplier-circuit discovery, binary autoencoding, logic-network inference, and cellular automata learning—the method achieves exact solutions or strong generalization in regimes where standard gradient-based methods struggle. These results demonstrate that projection-based constraint satisfaction provides a viable and conceptually distinct foundation for learning in discrete neural systems, with implications for interpretability and efficient inference.
\end{abstract}

\maketitle

\section{Introduction}

This year marks the 40th anniversary of ``Learning representations by back-propagating errors''\cite{rumelhart1986learning}, the paper that launched the modern era of machine learning, known to the public as AI. Rumelhart, Hinton, and Williams understood that nonlinearities were required to break out of the more limited representations used by ``perceptrons'' \cite{rosenblatt1958perceptron}, and that gradient optimization of network parameters could still be carried out efficiently, thanks to the chain rule of calculus. Though technical in nature, most workers in machine learning would find it hard to imagine a science of automated learning that was not built on the foundation of back-propagation.

Just three years prior to back-propagation, an optimization breakthrough in a little known image processing task called \textit{phase retrieval} was making waves. Like artificial neural networks, phase retrieval works with large sets of data in the presence of nonconvexity. In neural networks, it is the function being minimized that is nonconvex, while in phase retrieval, it is the geometry of the constraint sets.
Though constraint-based methods---with projections replacing gradient steps---had been used in phase retrieval for years, Jim Fienup's ``hybrid-input-output'' algorithm \cite{fienup1982phase} was miraculous by comparison.
Whether neural network optimization or phase retrieval is the harder instance of nonconvexity is open to debate. In any case, it is interesting that the function minimization approach never gained traction in phase retrieval (with constraints expressed in terms of an error function).

The constraint formulation of neural networks is no less natural than the usual one based on loss functions. In this article, we imagine a hypothetical world where
Fienup’s breakthrough in constraint satisfaction for non-convex problems was so widely known it would have been considered alongside gradient-descent. Because there is considerably more freedom in setting up constraint formulations than writing down loss functions, much of how we understand the new technique was developed for diverse applications---puzzles, sphere packings---that have nothing to do with phase retrieval or neural networks. This article contains only a brief summary. For a more comprehensive review suitable for nonexperts we refer the reader to \cite{elser2025solving}.

This article is organized as follows. In the overview, we give a high-level comparison of the two approaches to network optimization, highlighting the advantages brought by the constraint-based approach. A key technical element for achieving interpretability and boosting generalization is the choice of nonlinear activation: the Boolean threshold function. In the review of BTFs that follows, it becomes clear that these are not ``just functions'' in the new setting. How the many constraints in neural network training are combined into just two easy constraints $A$ and $B$, using the divide-and-concur framework, is covered next. This is followed by a brief exposition of Fienup's algorithm and its generalization in geometric terms, where the work in each step (iteration) is performed by projections to the sets $A$ and $B$.

A series of six applications follows. The problem of inferring a small multiplier circuit from a multiplication table is a good way to get familiar with the new approach. A historically interesting application comes next, where we revisit the binary autoencoder and manage to get true binary codes. MNIST-related problems follow, because the introduction of a new method would be incomplete without them. The applications that deserve the most attention are the ones that demonstrate generalization: logic circuits and cellular automata. The generalization exhibited for these complex tasks is quite striking.

Readers eager to get a more hands-on experience can go to the software repository \texttt{boolearn} and perform all the experiments reported here. The software is entirely in C, uses only the standard libraries, and runs in the Linux environment.

\section{Overview}\label{sec:overview}

The function-minimizing approach to neural network optimization places restrictions on admissible activation functions. In order to compute gradients, these should at least be piecewise differentiable. The sigmoid and hyperbolic-tangent functions not only are smooth everywhere, but they additionally have the nice feature that they saturate to values that can be interpreted as \textsc{True/False}.

The binary character of sigmoids was mentioned in the technical report \cite{rumelhart1985learning} on which ``Learning representations by ...'' was based. Rumelhart et al. used an autoencoder network to drive home their optimization method's ability to find weights that could map $2^k$ linearly independent vectors to themselves, represented on $2^k$ nodes, even when forced to pass through a ``code-layer'' with only $k$ nodes. Nonlinear activation is essential for this parlor trick. The authors went on to remark that the $0/1$ saturations of their sigmoid functions meant their network was encoding into and decoding-from, a binary code. But as their results showed, their codes often included the value 1/2---the output of a sigmoid when it is furthest from being saturated.

The augmentation of the binary code can be overlooked when the application does not call on that level of interpretability. On the other hand, nice things are possible when the binary code is strict. The decoder-half of the autoencoder is a simple generative model, where $2^k$ data vectors are generated by sampling a distribution on the $k$ inputs. But the generative model only works when the admissible input-codes are a known code.

The easiest way to realize strict binary activation is through the Boolean threshold function:
\begin{equation}\label{eq:BTFformula}
y=\sgn(w\cdot x).
\end{equation}
This is the function used throughout this work. The weights $w$ are the parameters of the network, and are learned as in standard neural networks. But the output $y$ and the elements of the input vector $x$ are always $\pm 1$.

One way that Boolean threshold functions, or BTFs, are used differently here is that in addition to constraint \eqref{eq:BTFformula}, we also impose
\begin{equation}\label{eq:marginconstraint}
|w\cdot x|\ge\mu.
\end{equation}
The positive number $\mu$ is the BTF's \textit{margin}. Our BTFs are not functions in the usual sense. The margin constraint \eqref{eq:marginconstraint}, when satisfied with suitable weights for all the training data, compels all the inner products $w\cdot x$ to avoid a gray area set by $\mu$. In a sense, a successfully trained BTF
network makes decisions with conviction.

Implementing BTFs as just described, with a loss function, while not impossible, is complicated and requires additional parameters. In our approach, constraints \eqref{eq:BTFformula} and \eqref{eq:marginconstraint} are treated as a single ``BTF constraint'' and there is an elementary operation, called a projection, that takes an arbitrary trio $(w,x,y)\in\mathbb{R}^{m+m+1}$ and outputs the unique trio $(w',x',y')$ that satisfies the BTF constraint while minimizing
\begin{equation}\label{eq:BTFdistance}
\|w'-w\|^2+\|x'-x\|^2+(y'-y)^2.
\end{equation}
These projections are carried out concurrently on all the neurons/BTFs of the network. We call the independent, aggregate constraints at the BTFs, the $A$ constraint. Geometrically, $A$ is a continuous set in the high-dimensional space of $w$, $x$ and $y$ variables. When the network has $N$ nodes and $E$ edges, the dimension of that space is $N+2E$. The weights ($w$) and inputs ($x$) live on the edges because they are associated with sending/receiving node-pairs. 

The idea behind treating all the neurons independently, in the $A$ constraint, is called \textit{divide-and-concur} \cite{gravel2008divide}. There is another aggregate constraint, called $B$, that imposes ``concur'' or the equality of one neuron's output ($y$) with another neuron's input ($x$). Implementing the network architecture is strictly the job of the $B$ constraint. This constraint is a hyperplane (a system of many linear equations) and is even easier to project to than the $A$ constraint.

The weights also appear in the $B$ constraint, in a way that represents another departure from gradient-based optimization. Instead of there being one set of network variables (comprising $w$'s, $x$'s, and $y$'s), the constraint-approach uses as many instantiations of the network as there are data items. Each instantiation has its own set of $x$ and $y$ variables, since each data item constrains these differently at the network's inputs and outputs. But forcing the $w$ variables to be the same across all the instantiations would greatly complicate the projection to the $A$ constraint. Not only do we want $(w,x,y)$ to be independent across all the nodes/BTFs, we want that independence to extend across data items. The divide-and-concur solution is to have separate weights for each data item (to keep the $A$ constraint simple) and impose their equality in the $B$ constraint.

In addition to forcing the weights to concur across data items, the $B$ constraint imposes the constraint
\begin{equation}\label{eq:wnorm}
w\cdot w = m
\end{equation}
on a BTF's weights when it has $m$ inputs. This matches the normalization of the Boolean inputs, $x\cdot x = m$. And since $y=\pm 1$, all three variable types have rms value 1. Sensible variable normalizations are important if we use the distance \eqref{eq:BTFdistance} when computing projections.

Gradient-descent optimized networks are usually trained in the ``stochastic'' mode, where the gradient of the loss is evaluated for small, random batches of the data. When the entire data set is used, cancellation in the gradient computation can result in stagnation. There is no analogous problem in the constraint-based approach.

To understand the last statement, one needs to first learn how the constraint satisfaction algorithm, using projections, finds variable assignments that satisfy both $A$ and $B$. This is covered in Section \ref{sec:RRR}. For the present, it suffices to know that if $z_A$ and $z_B$
are projections of the full set of network variables---denoted by $z$---to the two constraints, then the optimizer moves $z$ by the rule
\begin{equation*}
z'=z+\beta(z_A-z_B)
\end{equation*}
in each iteration. The positive number $\beta$ is the ``time-step'' and is roughly analogous to the learning-rate parameter of gradient descent. But the difference $z_A-z_B$ is not the gradient of anything. Instead, the ``gap''
\begin{equation}\label{eq:gap}
\Delta=\|z_A-z_B\|
\end{equation}
measures the distance that is left to reconcile between the ``divide'' and ``concur'' constraints.

When the gap is zero, both constraints are satisfied. That is, all the individual BTF constraints are satisfied, for all the data items, and there is just one shared set of weights. In constraint problems where $A$ and $B$ are convex, $\Delta$ always decreases and behaves like the loss in gradient descent \cite{bauschke1996projection}. But in our case $A$ is not convex, and $\Delta$ can increase. When this happens, the algorithm is extricating itself from situations where the $A$ and $B$ variable assignments are locally irreconcilable. This nice feature persists when the training batches are large. So unlike stochastic-gradient-descent, which uses small batches, in the constraint-based method the batch size should be as large as the computing resources allow.

The margin \eqref{eq:marginconstraint} and weight normalization \eqref{eq:wnorm} constraints, acting together, bring about another nice feature: network sparsity. Like the strict Boolean activation of the BTF, networks with few relevant edges (having significant weight) bring interpretability to the representation. In Section \ref{sec:BTF} we prove that by defining the imposed margin on $m$-input BTFs as
\begin{equation}\label{eq:sigma}
\mu_m=\sqrt{m/\sigma}\;,
\end{equation}
where $\sigma$ is called the \textit{support hyperparameter}, then up to a caveat all the BTFs in the network will be equivalent to BTFs having at most $\sigma$ nonzero elements in their weight vectors. The caveat is that, in training, each BTF must see the complete set of possible input patterns on its ``relevant'' weights---those that are nonzero in the equivalent BTF. This condition has a reasonable chance of being satisfied when the BTF supports are small.

A corollary of the support property that we  impose through the margin constraint and \eqref{eq:sigma}, is the property that when $\sigma$ is an odd integer the only way that BTFs can satisfy the equality case of the margin constraint is when the weight vector has exactly $\sigma$ nonzero and equal-magnitude elements. The case $\sigma=3$, corresponding to weights
\begin{equation}
w=\sqrt{m/3}\,(\pm 1,\pm 1,\pm1,0,\;\ldots\;,0)
\end{equation}
and permutations, is already interesting. BTFs with such weights implement the 3-input majority gate (\textsc{Maj}), which acts as a 2-input \textsc{And} or \textsc{Or} when one of its inputs is held fixed. All the BTFs in our networks have an edge to a special node with value $-1$, so by choice of the weight to that node, the BTF can elect to implement a 2-input \textsc{And/Or}. The margin $\mu_m=\sqrt{m/3}$ also admits BTFs with fewer than 3 relevant input weights, but these are all equivalent to a BTF with just one relevant input---the \textsc{Copy}-gate.

\begin{figure}
    \centering
    \includegraphics[width=\columnwidth]{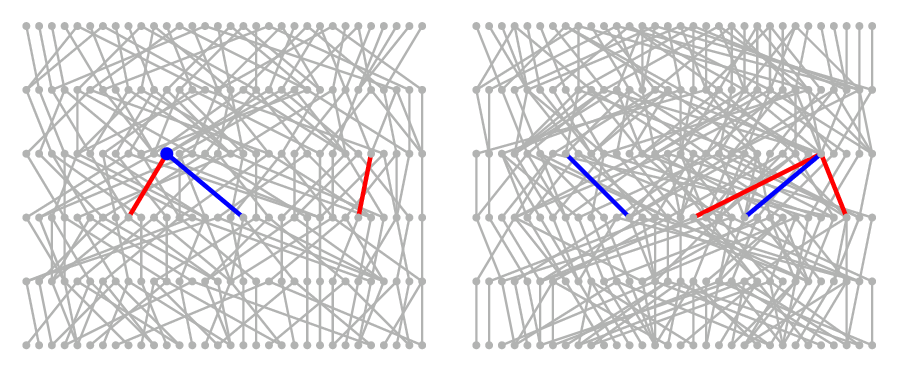}
    \caption{Two 5-layer random logic circuits where each node either copies a value from the layer below (isolated colored lines) or applies a logical gate to them (colored groups). The gates are either 2-input \textsc{And/Or} (left circuit) or 3-input \textsc{Maj} (right circuit). The colors (red/blue) of the highlighted edges indicate whether or not negation is applied. The blue node in the left circuit means this BTF is taking input from the constant node with weight $+1$, thereby choosing to implement \textsc{And}.}
    \label{fig:randlogic}
\end{figure}

There is no better demonstration of what the constraint-based method is capable of than the reconstruction of logic circuits (Fig. \ref{fig:randlogic}). Whereas the number of Boolean functions from $m$ inputs to $m$ outputs is astronomical ($2^{m 2^m}$), learning the function from a modest amount data is possible, in principle, if we know it can be expressed by a circuit comprising just \textsc{Not} and few-input logic gates arranged in a few-layer network. When deployed on a network with fully connected layers and the setting $\sigma=3$, our method is forced to consider only circuits where each BTF either just copies a Boolean value from one of the nodes in the layer below, or applies simple logic to 2 or 3 of those values (depending on whether the special constant node was chosen as one of the three allowed by $\sigma=3$). In any case, there is no information telling the constraint solver how one layer is sparsely wired to the next, or where to insert the \textsc{Not} gates (weights of negative sign).

\begin{figure}
    \centering
     \includegraphics[width=\columnwidth]{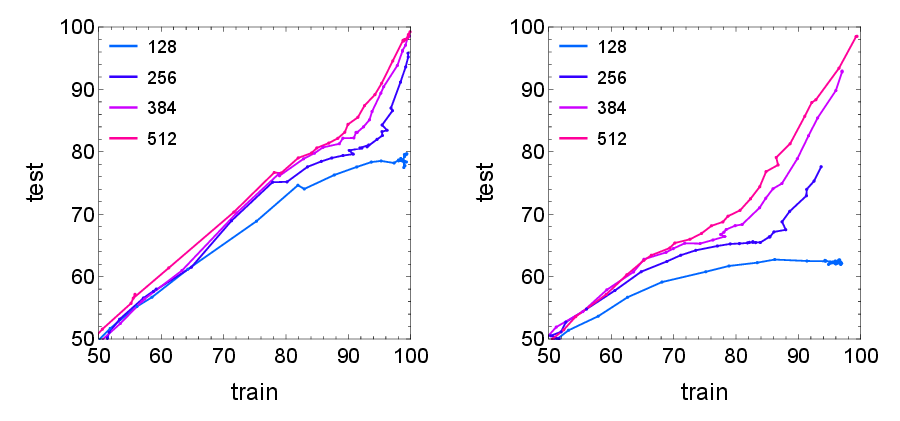}
    \caption{Evolution of the training and test accuracies when training on 128, 256, 384, and 512 data generated by the random logic networks above. Learning the random \textsc{And/Or} data (left plot) is easier, but 256 data appear to be sufficient to learn both types of data, given sufficient iterations of the algorithm.}
    \label{fig:accplots}
\end{figure}

While more details are given in Section \ref{sec:randlogic}, a preview of circuit reconstruction should convey that constraint-based learning with BTF networks is remarkable. Figure~\ref{fig:randlogic} is a rendering of two 5-layer circuits that were used to generate data---pairs of strings of 32 bits. The evolution of the training and test accuracies for different numbers of training data are plotted in Figure \ref{fig:accplots}. Accuracies are just the fraction of correct output bits when given an input string from the training set or, for testing, a string the learning algorithm has never seen. We see evidence of a transition at 256 training items. Below this number the reconstructed circuits are clearly different from those in Figure~\ref{fig:randlogic} because the test accuracy is well below 100\%. But above 256 training items the test accuracies for both data sets appear to be on track to reach 100\%, given enough iterations of the constraint solver (limited to $10^6$ in this demonstration).

Figure~\ref{fig:accGDplots} shows accuracy results for the same data when the gradient-descent method is used. All the results we report using this method should be seen as \textit{baselines}: starting points that use state-of-the-art optimiziation on the most widely used multilayer perceptron model. Details can be found in appendix~\ref{sec:gd_baselines}. Refinements of the baseline method clearly have a ways to go to equal the performance of the constraint-based method. The test accuracies in Figure~\ref{fig:accGDplots} show no signs of reaching 100\%, and there is no evidence of a transition at some number of training data. We estimate the information sufficiency for perfect circuit reconstruction in Section \ref{sec:randlogic} and find it is just above 128 items. While both methods were limited to $10^6$ steps/iterations, the plots suggest only the constraint method's test accuracy would improve when this number is increased.

\begin{figure}[t!]
    \centering
    \includegraphics[width=\columnwidth]{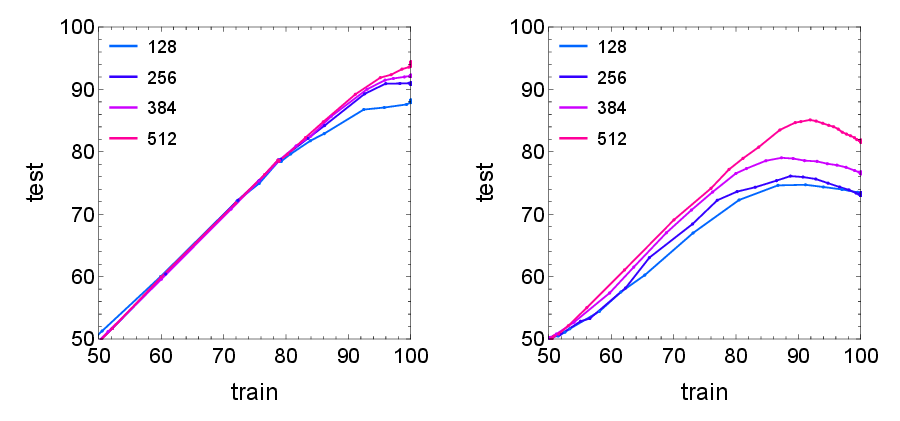}
        \caption{Same evolution of learning curves as in Figure \ref{fig:accplots} but for the gradient-descent method.}
    \label{fig:accGDplots}
\end{figure}

High test-accuracies, when faced with ``a poverty of stimulus,'' can be explained by the fact that the learning algorithm is forced to only consider models that are very close to the models that generated the data (same number of layers, sparse, etc.). That would be truly amazing and the use of the term ``reconstruction'' would be justified. However, the distribution of the learned weights shows that something not quite that simple is going on. These are shown for the  5-layer-circuit data in Figure \ref{fig:wgthist}. Because the weight vector for every neuron has normalization $w\cdot w=32+1$, when we see peaks in the distribution at $\pm\sqrt{33/1}\approx \pm5.7$ it means there are neurons receiving effectively just one input (\textsc{Copy}). Similarly, peaks at $\pm\sqrt{33/3}\approx \pm 3.3$ correspond to \textsc{And/Or/Maj} gates. The distributions in the lower panels show just the lowest-layer weights, and these are indeed close to the values just described. The upper panels show the weight distributions over all five layers, and these are more diverse. Is the algorithm getting sloppy in the higher layers?

\begin{figure}[b!]
    \centering
    \includegraphics[width=\columnwidth]{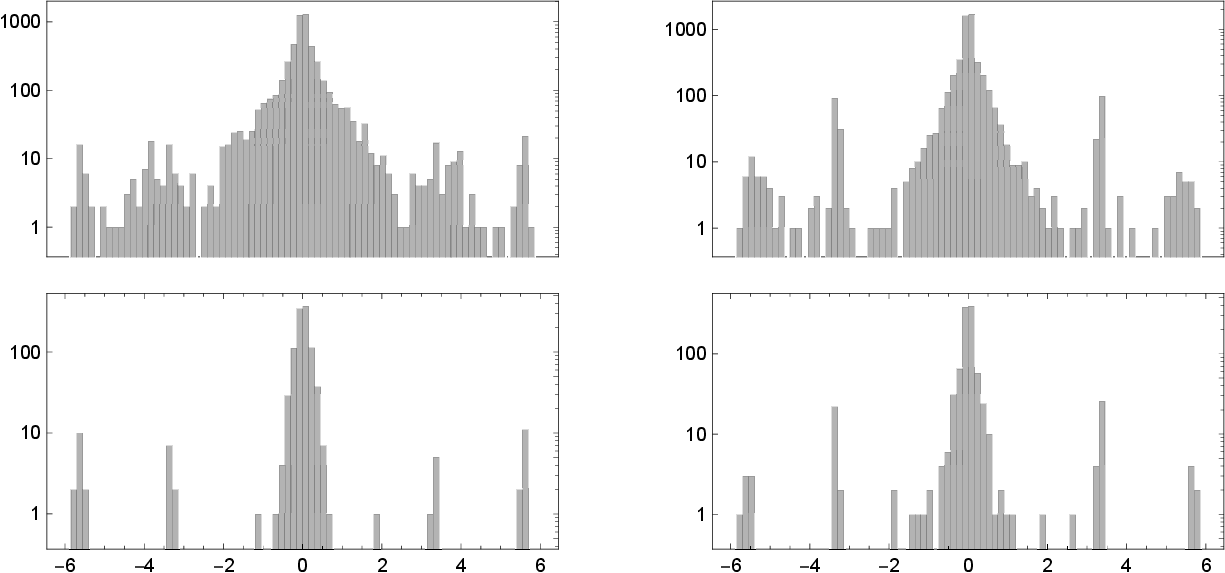}
    \caption{Distribution of  network weights after training on the two types of random logic data (left: \textsc{And/Or}, right: \textsc{Maj}). The lower histograms show the weights in just the first layer of each network.}
    \label{fig:wgthist}
\end{figure}

The diversification of the weights has an innocent explanation. In the ground truth models (Figure \ref{fig:randlogic}) there is a small probability that the values at two nodes in the same layer are perfectly correlated, or perfectly anticorrelated (when these are copied from the same node in the layer below). A gate taking input from one of these nodes could instead take input from the other node (negated, if anticorrelated) without changing its output. But it could also take a fixed mixture, with weights $p w$ and $(1-p)w$, and still produce the same output. This has the effect of decreasing the neuron's $w\cdot w$ norm, with the even mixture ($p=1/2$) giving the greatest decrease. 
To restore the norm, the constraint solver will scale up all of the neuron's weights, thereby also increasing the margin. The consolidation of nodes, as in this example of three relevant nodes becoming four, is inevitable when there is a loss of entropy in the higher layers of the network. It allows for a diversification of the network weights without compromising functionality.

Learning a Boolean function from examples is known as \textit{Boolean function inference}. In machine learning the same task would be described as generalization, albeit with rather strong constraints on the model. Models that implement simple logic operations over some small number of layers might be a good fit for various kinds of symbolic data. This article is not about those potential applications, but about a technology that can train such models.

\section{Boolean threshold functions as constraints}\label{sec:BTF}

A Boolean threshold function, or BTF, is a function $f: \{-1,1\}^m\to \{-1,1\}$ that can be represented by a weight vector $w\in\mathbb{R}^m$ as
\begin{equation}
f(x)=\sgn(w\cdot x)\;.
\end{equation}
We will use $X_m=\{-1,1\}^m$ for the hypercube of possible inputs and restrict weights as ``admissible'' by the property
\begin{equation}
\forall x\in X_m :\; w\cdot x\ne 0\;.
\end{equation}
In the machine learning context we will refer to the values $\pm 1$ as ``Boolean'', where $-1$ corresponds to the more standard 0 or \textsc{False}. In networks, a weight vector's negative elements act as \textsc{Not} gates. 

BTFs have been the subject of much research, mostly in connection with the \textit{Chow parameter problem} \cite{o2008chow}. This is the problem of constructing a representation---an admissible $w$---when given the set $X_+=f^{-1}(1)$ for some BTF (more specifically, parameters derived from $X_+$ first introduced by Chow). Thankfully, this difficult chapter of BTF history is largely irrelevant for the margin and support characteristics of BTFs that this work relies on.

Since we are always interested in the representations, we use the notation $f_w$ for BTFs when a question involves the representation/weights $w$. The number of inputs is always $m$. The property of BTFs that we will exploit in networks calls on the following definitions.

\begin{definition}
The support of a weight vector $w$, or $\supp(w)$, is the number of ``relevant'' inputs of $f_w$. Input $p$ is relevant if and only if there exists some $x\in X_m$ such that
\begin{equation*}
f_w(x_1,\ldots,x_p,\ldots,x_m)\ne f_w(x_1,\ldots,-x_p,\ldots,x_m)\;.
\end{equation*}
\end{definition}

\begin{definition}
The \textit{margin} of a weight vector $w$ is given by
\begin{equation}
\mu(w)=\min_{x\in X_m}|w\cdot x|
\end{equation}
and is strictly positive when $w$ is admissible.
\end{definition}
\noindent A BTF with $\supp(w)=1$ just acts as a wire that conveys a single input value to the output, with or without negation. There are no BTFs with $\supp(w)=2$. That's because if $w=(w_1,w_2)$, then to be admissible $|w_1|\ne |w_2|$. But the output of such a BTF only depends on the input having the largest magnitude weight, so $\supp(w)=1$. BTFs first become interesting at $\supp(w)=3$. 

Fully supported weight vectors have the following bound on their margin:
\begin{lemma}\label{lem:marginbound}
If $w$ is admissible and $\supp(w)=m$, then
\begin{equation}\label{lem:mubound}
\mu(w)\le w_\mathrm{min}=\min_p |w_p|\;.
\end{equation}
\end{lemma}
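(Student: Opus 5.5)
The plan is to fix the index $p$ at which $|w_p|$ is smallest and use the relevance of input $p$. Since $\supp(w)=m$, every input is relevant, and in particular input $p$ is. So there is some $x\in X_m$ with
\begin{equation*}
f_w(x)\ne f_w(x^{(p)}),
\end{equation*}
where $x^{(p)}$ denotes $x$ with its $p$-th coordinate negated.

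Next I compare the two inner products. Because $x$ and $x^{(p)}$ differ only in coordinate $p$,
\begin{equation*}
w\cdot x - w\cdot x^{(p)} = 2w_p x_p .
\end{equation*}
Admissibility guarantees that both inner products are nonzero. Since the BTF outputs differ, they have opposite signs. Write $a=w\cdot x$ and $b=w\cdot x^{(p)}$. Opposite signs give $|a-b|=|a|+|b|$, and hence
\begin{equation*}
|a|+|b| = 2|w_p| .
\end{equation*}

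It follows that $\min(|a|,|b|)\le |w_p|$. Both $x$ and $x^{(p)}$ lie in $X_m$, so by the definition of the margin
\begin{equation*}
\mu(w)\le \min(|a|,|b|)\le |w_p| = w_\mathrm{min}.
\end{equation*}

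No step here is a real obstacle. The only point needing care is the observation that a sign flip between two nonzero reals forces the sum of their absolute values to equal their difference. This is exactly where admissibility enters: it excludes a zero inner product, so that the two distinct outputs correspond to strictly opposite signs.
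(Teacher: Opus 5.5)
Your proof is correct and follows essentially the paper's argument: the paper's $y^*=w\cdot x-w_px_p$ with $|y^*|<w_\mathrm{min}$ is the midpoint of your $a$ and $b$, and its bound $\mu(w)\le\big||y^*|-w_\mathrm{min}\big|$ is exactly your $\min(|a|,|b|)=|w_p|-|y^*|$. Your version is slightly more direct, because it uses the relevance of input $p$ outright instead of arguing by contrapositive over the set $Y_{-p}$, and it makes explicit where admissibility is needed.
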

\begin{proof}
Let $p$ be an input for which $|w_p|$ is a minimum and call this number $w_\mathrm{min}$, where $w_\mathrm{min}>0$ since otherwise $\supp(w)<m$. Now consider the set
\begin{equation}
Y_{-p}=\big\{w\cdot x-w_p x_p: x\in X_m\big\}\;,
\end{equation}
that is, the set of threshold function values without the contribution from input $p$. At least one element $y^*\in Y_{-p}$ satisfies $|y^*|<w_\mathrm{min}$, since otherwise there are no elements $y\in Y_{-p}$ for which $y\pm w_\mathrm{min}$ have opposite signs and input $p$ would be irrelevant (so $w$ would not have full support). But $|y^*\pm w_\mathrm{min}|$ are bounds on the margin, so we find
\begin{equation}
\mu(w)\le \Big||y^*|- w_\mathrm{min}\Big|\le w_\mathrm{min}\;.
\end{equation}
\end{proof}

Because the weight vectors in our networks have a fixed normalization, the result we rely on the most is the following:
\begin{theorem}\label{thm:normmarginbound}
Let $w$ be an admissible weight vector with $\supp(w)=m$, then
\begin{equation}
\frac{\mu(w)}{\|w\|}\le\frac{1}{\sqrt{m}}
\end{equation}
and the equality case requires that $m$ is odd and all the elements of $w$ have equal magnitude.
\end{theorem}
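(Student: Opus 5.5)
Lemma~\ref{lem:marginbound} gives $\mu(w)\le w_\mathrm{min}$. Since every $|w_p|\ge w_\mathrm{min}$, we have $\|w\|^2=\sum_p w_p^2\ge m\,w_\mathrm{min}^2$. Combining the two,
\begin{equation*}
\frac{\mu(w)}{\|w\|}\le \frac{w_\mathrm{min}}{\sqrt{m}\,w_\mathrm{min}}=\frac{1}{\sqrt{m}}\;,
\end{equation*}
using $w_\mathrm{min}>0$, which holds because full support forbids zero weights. This settles the inequality.

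For the equality case, equality forces both intermediate inequalities to be tight. Tightness of $\|w\|^2\ge m\,w_\mathrm{min}^2$ requires $w_p^2=w_\mathrm{min}^2$ for every $p$, so all elements of $w$ have the common magnitude $a=w_\mathrm{min}$. Then $w\cdot x=a\sum_p s_p x_p$ with $s_p=\sgn(w_p)$. As $x$ ranges over $X_m$, the sum $\sum_p s_px_p$ takes every value in $\{-m,-m+2,\ldots,m\}$. If $m$ were even this set contains $0$, so $w$ would not be admissible. Hence admissibility forces $m$ odd, and then $\mu(w)=a$ and $\|w\|=a\sqrt{m}$, so equality indeed holds.

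The only point needing care is that equality in the chained bound genuinely forces tightness of the norm step. This holds because the other step, $\mu(w)\le w_\mathrm{min}$, can only lower the ratio. Concretely, if some $|w_p|>w_\mathrm{min}$, then $\|w\|>\sqrt{m}\,w_\mathrm{min}\ge\sqrt{m}\,\mu(w)$, which is strict. So no finer analysis of the lemma's equality case is needed, and the whole proof is a short combination of Lemma~\ref{lem:marginbound} with the elementary norm bound.
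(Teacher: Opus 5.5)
Your proof is correct and follows the paper's argument: Lemma~\ref{lem:marginbound} bounds the numerator, $\|w\|^2\ge m\,w_\mathrm{min}^2$ bounds the denominator, and in the equality case equal magnitudes with even $m$ would make $w\cdot x=0$ for some $x$, contradicting admissibility. Your additional remarks, that equality genuinely forces tightness of the norm step and that odd $m$ with equal magnitudes actually attains equality, are sound refinements that the paper leaves implicit.
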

\begin{proof}
Using lemma \ref{lem:marginbound} on the numerator,
\begin{align}
\frac{\mu(w)}{\|w\|}&\le\frac{w_\mathrm{min}}{\sqrt{\sum_{p=1}^m w_p^2}}\\
&\le \frac{w_\mathrm{min}}{\sqrt{m w_\mathrm{min}^2}}=\frac{1}{\sqrt{m}}\;.
\end{align}
Equality requires that all $m$ terms in the denominator are equal, or that the weight vector has equal-magnitude elements. But such a weight vector for even $m$ is not admissible, so the equality case is only possible for odd $m$.
\end{proof}

When weight vectors $w$ and $w'$ represent the same BTF we write $w\sim w'$. 
In our networks the number of BTF inputs $m$ may vary, and the weight vectors are normalized as
\begin{equation}\label{eq:weightnormalization}
\|w\|^2=m\;.
\end{equation}
With this choice, the rms value of the weight vector elements matches that of the Boolean inputs $x$ and the Boolean output $y$. Under the normalization constraint, maximizing the margin translates to fewer nonzero elements:
\begin{lemma}\label{lem:optweights}
Suppose an $m$-input BTF has $\supp(w)=n\le m$. Then there exists a $w^*\sim w$ with exactly $n$ nonzero weights and $\mu(w^*)\ge\mu(w)$.
\end{lemma}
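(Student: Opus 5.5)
The plan is to take $w^*$ to be $w$ with every irrelevant coordinate set to zero. Let $R\subseteq\{1,\ldots,m\}$ be the set of relevant inputs of $f_w$, so $|R|=n$, and let $I$ be its complement. Define $w^*_p=w_p$ for $p\in R$ and $w^*_p=0$ for $p\in I$.

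First I check that $w^*$ has exactly $n$ nonzero weights. A coordinate with $w_p=0$ is never relevant, since flipping $x_p$ does not change $w\cdot x$. So every $p\in R$ has $w_p\neq 0$, and $w^*$ has exactly $n$ nonzero entries.

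The core step is a symmetry argument over the irrelevant coordinates. Write any $x\in X_m$ as $(x_R,x_I)$ and set
\begin{equation*}
s=\sum_{p\in R}w_p x_p,\qquad t=\sum_{p\in I}w_p x_p .
\end{equation*}
Because no input in $I$ is relevant, flipping the $I$-coordinates one at a time shows that $f_w$ does not depend on $x_I$ at all. Hence $\sgn(s+t)$ is the same for every $t$ in $T=\{w_I\cdot x_I : x_I\in\{-1,1\}^{|I|}\}$. The set $T$ is symmetric under $t\mapsto -t$, since we may replace $x_I$ by $-x_I$. So for any $t\in T$, the numbers $s+t$ and $s-t$ have the same sign, and both have magnitude at least $\mu(w)$ by admissibility and the definition of the margin.

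Then $s=\tfrac12\big((s+t)+(s-t)\big)$ is the average of two numbers of the same sign. Therefore $s$ has that sign as well, and
\begin{equation*}
|s|\ge\min\big(|s+t|,|s-t|\big)\ge\mu(w).
\end{equation*}
Since $w^*\cdot x=s$, this shows $w^*\cdot x\neq 0$ for every $x$, so $w^*$ is admissible. It also shows $f_{w^*}(x)=f_w(x)$, so $w^*\sim w$. Taking the minimum over $x\in X_m$ gives $\mu(w^*)\ge\mu(w)$.

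The only delicate point I expect is the passage from the coordinate-wise definition of relevance to joint independence of $f_w$ from all of $x_I$. This follows by flipping the irrelevant coordinates one at a time, since each single flip preserves $f_w$ by the definition of irrelevance. The degenerate case $n=0$, where $f_w$ is constant, is also handled by the same argument: there $s=0$ would force $s+t$ and $s-t=-(s+t)$ to have opposite signs, contradicting constancy. So a constant BTF cannot be admissible, and the case does not arise.

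Finally, I would remark on normalization, since the paper works under \eqref{eq:weightnormalization}. Zeroing coordinates gives $\|w^*\|\le\|w\|$. Rescaling $w^*$ back to norm $\sqrt{m}$ therefore multiplies it by a factor $\ge 1$, which only increases the margin and does not change the function. So the conclusion survives the normalization, and this is what connects the lemma to Theorem~\ref{thm:normmarginbound} applied on the support of $w^*$.
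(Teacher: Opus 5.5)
Your proof is correct and uses the same construction as the paper: set the weights on the irrelevant inputs to zero, then (under the normalization $\|w\|^2=m$) rescale by a factor $\lambda\ge 1$. The paper only asserts that zeroing preserves $f_w$ and credits the margin gain to the factor $\lambda$; your symmetry argument on $T=\{w_I\cdot x_I\}$, which gives $\sgn(s)=\sgn(s+t)$ and $|s|\ge\min(|s+t|,|s-t|)\ge\mu(w)$, proves both facts and shows the margin does not drop even before rescaling, so your version is the more complete of the two.
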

\begin{proof}
The weights of $w$ associated with the $m-n$ irrelevant inputs can be set to zero without changing the BTF. To restore normalization \eqref{eq:weightnormalization} to the resulting $w^*$, the nonzero elements are rescaled by a factor $\lambda\ge 1$. This changes the margin by the same factor.
\end{proof}
\noindent This property of BTF representations combined with theorem \ref{thm:normmarginbound} will let us learn network representations of Boolean data that are sparse.

\subsection{Constraints on margins in training}\label{sec:marginsintraining}

During training each $m$-input BTF with weight vector $w$ is subject to the following constraint:
\begin{equation}\label{eq:margincon}
\forall x\in X_t:\quad \mu_m\le |w\cdot x|\;.
\end{equation}
Here $\mu_m$ is an $m$-dependent margin lower bound set by the user, and $X_t\subset X_m$ is the set of inputs seen by the BTF during training. $X_t$ might be a proper subset because the size of the training data is small, the node consolidation effect, or a combination of these. Because of this possibility it may be possible to satisfy all the margin constraints \eqref{eq:margincon} even when $\mu_m>\mu(w)$.

The possibility that $X_t$ is significantly smaller than $X_m$ is made unlikely by a combination of two things. First, in constraint-based training one is not forced to use small data batches as in the stochastic approach to gradient-based training. The second reason is slightly circular. When sparse network representations exist, where the BTFs all have small support, then it is only the size of the subset of the relevant inputs that matter. A BTF with support $n$ much smaller than $m$ only has to see the much smaller complete set of $2^n$ vectors on the relevant inputs in order to satisfy the more generous constraint
\begin{equation}\label{eq:marginlowerbound}
\mu_m\le \mu(w^*)
\end{equation}
promised by lemma \ref{lem:optweights} for the representation $w^*$ that has only $n$ nonzero elements.

For the reasons just given, it makes sense to define the $m$-dependent margin bound as
\begin{equation}
\mu_m=\sqrt{m/\sigma}\;,
\end{equation}
where $\sigma$ is the \textit{support hyperparameter}. Assuming a sparse network representation exists, with sparse BTF representations $w^*$, then applying theorem \ref{thm:normmarginbound} to a BTF with $n\le m$ nonzero elements in its weight vector $w^*$, we find
\begin{equation}
\mu(w^*)\le \frac{\|w\|}{\sqrt{n}}=\sqrt{m/n}\;.
\end{equation}
Theorem \ref{thm:normmarginbound} only applies when the inputs to the BTF are sufficiently diverse as described above, but if that is true, then \eqref{eq:marginlowerbound} is also true and we arrive at
\begin{equation}\label{eq:supportbound}
n\le \sigma\;.
\end{equation}
We now have a strategy for learning sparse network representations: force the BTF margins to be large with a small $\sigma$.

In our applications of the new training method we will encounter situations where the bound \eqref{eq:supportbound} is rigorous. When the network has $m$ inputs and data for all $2^m$ unique inputs to the first layer of BTFs is available, then in any solution to the constraint satisfaction problem those BTFs will have supports $n$ that are upper bounded by the parameter $\sigma$. In the event that the smallest $\sigma$ for which solutions exist is an odd integer $n^*$, then theorem \ref{thm:normmarginbound} also tells us that for the BTFs (in the first layer) with exactly $n^*$ nonzero weights, these will have equal magnitudes.

\subsection{Representing logic gates}\label{sec:replogicgate}

The weights $w$ of BTFs have simple structures when their margin $\mu(w)$ is maximized subject to a fixed norm constraint. The maximizing $w^*$ is uniquely determined by no more than $m$ ``extreme'' inputs $x$ of $f_w^{-1}(1)$ that have a common value of $w^*\cdot x$, and this value equals $\mu(w^*)$. Not surprisingly, the margin-maximizing $w^*$ are no different from the linear programming optimized weights that were tabulated as early as 1961 \cite{muroga1970enumeration}. Up to support $m=8$ these are integer vectors when normalized to have $\mu(w)=1$. For $m=9$ there are counterexamples, such as
\begin{equation}\label{eq:1171/2}
w={\textstyle \frac{1}{2}}(29,25,19,15,12,8,8,3,3)\;.
\end{equation}
To admit a BTF with this $w$, our ``support'' hyperparameter would have to satisfy $\sigma\ge \|w\|^2=1171/2$.

Packing highly complex decision encodings into individual neurons was not our motivation for using BTFs. Not only would the small margin constraints $\mu_m=\sqrt{m/\sigma}$ be problematic for robust learning, they would admit BTFs whose actual supports are very large. By keeping $\sigma$ small, the complexity of the representation is transferred from the processing internal to neurons, to the complexity of the wiring between them.

To add perspective to the objective just described, and defend our choice of the term ``support'', consider the case $\sigma=9$. The type of a BTF can be uniquely specified by a nonincreasing tuple of positive weights as in \eqref{eq:1171/2}. The full set of BTFs of a given type is obtained by applying all possible sign changes and permutations to the positive and ordered tuple. Here is the complete inventory of BTF types that are admitted with $\sigma=9$ (squared-norm $9$ or less) :
\begin{align*}
&(1,0,\;\ldots\;)\\
&(1,1,1,0,\;\ldots\;)\\
&(1,1,1,1,1,0,\;\ldots\;)\\
&(2,1,1,1,0,\;\ldots\;)\\
&(1,1,1,1,1,1,1,0,\;\ldots\;)\\
&(1,1,1,1,1,1,1,1,1,0,\;\ldots\;)\;.
\end{align*}
In all cases except one, the value of $\sigma$ translates directly to the support of the BTF, and the BTF implements a simple majority gate. We should not be dismissive of the simplicity of majority gates. After all, the 9-input gate can take 512 forms depending on the negation pattern of its weights.

In most of our applications we will set $\sigma=3$. This is sufficient to implement arbitrary Boolean functions in a deep network because the 3-input majority gate can implement 2-input \textsc{And/Or}:
\begin{align*}
\mbox{\textsc{Maj}}(-1,x_1,x_2)&=\mbox{\textsc{And}}(x_1,x_2)\\
\mbox{\textsc{Maj}}(+1,x_1,x_2)&=\mbox{\textsc{Or}}(x_1,x_2)\;.
\end{align*}
Our definition of BTFs with threshold value zero makes them \textit{self-dual}. Self-dual Boolean functions $f$ have the property $f(x)=-f(-x)$ for all inputs $x$. \textsc{And/Or} (for any number of inputs) are not self-dual because they are expressed in terms of a self-dual function with one input held constant. The BTFs in our networks all take input from a special node 0 with value $x_0=-1$. By choice of the corresponding weight $w_0$, a BTF can choose to be self-dual ($w_0=0$), and-like ($w_0>0$), or or-like ($w_0<0$). The special weight $w_0$ plays the same role as the bias parameter in standard neural networks.

Keeping a single input constant does not change any of the results in this section that used a margin defined in terms of a minimum over the entire hypercube of inputs. That's because
\begin{align*}
\min_{x\in X_m} |w\cdot x|&=\min_{x\in X^-_m\cup X^+_m} |w\cdot x|\\
&=\min\Big(\min_{x\in X^-_m} |w\cdot x|\;,\;\min_{x\in X^+_m} |w\cdot x|\Big)\\
&=\min\Big(\min_{x\in X^-_m} |w\cdot x|\;,\;\min_{x\in X^-_m} |-w\cdot x|\Big)\\
&=\min_{x\in X^-_m} |w\cdot x|\;,
\end{align*}
where $X^\pm_m$ is the half-hypercube whose points have $\pm 1$ for their first coordinate.

\subsection{Projecting to the BTF constraint}\label{sec:BTFproj}

The set $A$ corresponding to the BTF constraint is the union $A= A_+\cup A_-$, where (all upper or all lower signs)
\begin{equation*}
\begin{aligned}
A_\pm
&=\Big\{(w,x,y)\in\mathbb{R}^{m+m+1}:~
\pm w\cdot x\ge \mu_m,\; y=\pm 1\Big\}\;.
\end{aligned}
\end{equation*}
Projecting to the BTF constraint means, for any given $(w,x,y)\in\mathbb{R}^{m+m+1}$, compute the point $(w',x',y')\in A$ that minimizes the distance \eqref{eq:BTFdistance}. The projection is unique except for $(w,x,y)$ on a set of measure zero.

There are four cases, depending on whether $w\cdot x$ and $y$ have the same sign, and whether $|w\cdot x|$ exceeds the margin. For the easiest case, where the signs match and $|w\cdot x|\ge \mu_m$, the projection only needs to change $y$ : $y'=\sgn(y)$. In the most compute-intensive case, where both of these conditions are violated, two outcomes must be considered:
\begin{align*}
w'\cdot x'=+\mu_m,&\qquad y'=+1\\
w'\cdot x'=-\mu_m,&\qquad y'=-1\;.
\end{align*}
The projection of $(w,x)$ to these bilinear constraints can be worked out using the method of Lagrange multipliers. The result,
\begin{align*}
w'&=\frac{w+\lambda x}{1-\lambda^2}\\
x'&=\frac{x+\lambda w}{1-\lambda^2}\;,
\end{align*}
is expressed in terms of the unique zero $\lambda\in (-1,1)$ of the function
\begin{equation*}
h_{\pm}(\lambda)=\frac{(1+\lambda^2)w\cdot x+\lambda (w\cdot w+x\cdot x)}{(1-\lambda^2)^2}\mp \mu_m\;,
\end{equation*}
where the upper sign corresponds to the case $y'=+1$. In large networks the work to compute this projection scales as $m$, the number of vector elements that are updated and terms in the three inner product computations. An efficient root-finding method is described in \cite{elser2021learning, bauschke2022projecting}. 

\subsection{Symmetries}\label{sec:sym}

In addition to any permutation symmetries of the network, BTF networks have another symmetry not shared by the networks of standard machine learning. This is a local symmetry at each of the network's ``hidden'' nodes, defined to be nodes that both receive and send Boolean values to other nodes. This symmetry is a generalization of De Morgan's law and follows directly from the self-dual property of the BTF:
\begin{align*}
f_w(x)&=-f_w(-x)\\
&=-f_{-w}(x)\;.
\end{align*}
Simultaneously flipping the signs of the weights on all the incoming and outgoing edges of a hidden node has no effect on the Boolean function the network is expressing.

\section{Divide-and-concur for BTF networks}\label{sec:DC}

In this section we consider general feed-forward network architectures. There is no reference to layers, or the special node that holds the constant input. $\mathcal{N}$ is the set of all nodes; hidden, input and output nodes are denoted $\mathcal{H}$, $\mathcal{I}$, and $\mathcal{O}$. The network edges are denoted $\mathcal{E}$ and $\mathcal{D}$ is the set of data labels.

Variables have both a network and data label. At each node $p$, and for each data item $i$, there is an ``output variable'' $y_{p\, i}$. These hold network inputs when $p\in \mathcal{I}$ and BTF outputs otherwise. The other variables live on network edges, labeled $p\to q$ when the BTF at node $q$ receives input from node $p$. The ``input variables'' $x_{p\to q\, i}$ have this structure to allow them to satisfy the constraints for BTF $q$ without having to be consistent with $y_{p\, i}$. The ``weight variables''  $w_{p\to q\, i}$ clearly also have an edge label, and $i$ labels the data-item copy that allows the BTF constraints to be satisfied independently for each $i$.

The groupings of the three variable types in the divide-and-concur framework are shown in Figure \ref{fig:DCnet}. Highlighting in the left panel shows ``divide'': a disjoint union by BTFs, each one a fan-in of inputs (red lines), weights (green lines), with a single BTF output (blue disk). The constraints on each group are independent and together comprise the $A$ constraint. The highlighted fan-out in the right panel shows a different disjoint union. The constraint here is the concurrence of one BTF output with all the inputs that receive that output. Taken together, the independently concurring groups are part of the $B$ constraint. The $B$ constraints also includes the concurrence of weights across data items (green lines across multiple network instantiations).

\begin{figure}
    \centering
    \includegraphics[width=\columnwidth]{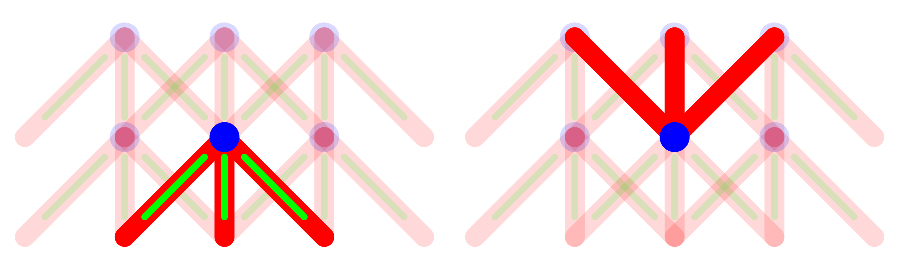}
    \caption{Divide-and-concur variable groupings in BTF networks. Red lines are neuron inputs ($x$), blue disks are neuron outputs ($y$), and green lines are weights ($w$). On the left, the highlighted fan-in shows all the variables in one BTF constraint. On the right, the highlighted fan-out show the concur between neuron outputs and inputs.}
    \label{fig:DCnet}
\end{figure}

More formally, $A$ has the following product structure
\begin{equation*}
A={\textstyle \prod}_{q\,\in\, \mathcal{H}\cup \mathcal{O}}^\times\;\;{\textstyle \prod}_{i\,\in\, \mathcal{D}}^\times \;\;A_{q\, i}\;.
\end{equation*}
Each factor has the form given in Section \ref{sec:BTFproj} with $y$ replaced by $y_{q\, i}$, $w$ and $x$ by the vectors with elements $w_{p\to q\, i}$ and $x_{p\to q\, i}$, where $p$ ranges over all nodes incident on $q$, which we denote $p\in \;\to\!\!q$. The $A$ constraint also has a factor associated with the input nodes. This is described later, as it involves the data.

The output $y_{p\, i}$ of BTF $p$ for data item $i$ and the inputs $x_{p\to q\, i}$ sent to BTFs $q$ (for the same data item) are no longer independent in the ``concur'' or $B$ constraint. Even so, $B$ also has a product structure:
\begin{equation*}
B={\textstyle \prod}_{p\,\in\, \mathcal{I}\cup \mathcal{H}}^\times\;\;{\textstyle \prod}_{i\,\in\, \mathcal{D}}^\times \;\;B_{p\, i}\;.
\end{equation*}
Each $B_{p\, i}$ is an independent constraint defined by the system of equalities
\begin{equation}\label{eq:xyconcur}
\forall q\in\; p\! \to\;\;  :\quad x_{p\to q\, i}=y_{p\, i}\;,
\end{equation}
where $p\! \to$ denotes the set of nodes on which $p$ is incident. The $B$ constraint also has a factor associated with the data (described later), now at all the output nodes.

To keep the weights across all the data items consistent, the $B$ constraint also has the following factors, one at each BTF:
\begin{equation*}
{\textstyle \prod}_{q\,\in\, \mathcal{H}\cup \mathcal{O}}^\times\;\; B_q\;.
\end{equation*}
Each factor labeled $q$ combines weight-equality across data items and weight normalization:
\begin{equation}
\forall p\in\; \to\! q\;  :\quad w_{p\to q\, i}=\bar{w}_{p\to q}\;,\label{eq:wconcur}
\end{equation}
\begin{equation}
{\textstyle \sum}_{p\,\in\; \to q} \;\bar{w}_{p\to q}^2=| \!\to\! q|\;.\label{eq:wside}
\end{equation}
The first statement simply states that every data-item  copy $i$ is equal to the same ``concur'' value, with symbol $\bar{w}_{p\to q}$. The squared-norm equals the number of inputs to the BTF, $| \!\to\! q|$ (denoted $m$ in Section \ref{sec:BTF}).

The $B$ constraint is so simple that with essentially no extra work we can compute the projection to this set for a natural generalization of the metric:
\begin{equation}\label{eq:metric}
\|z'-z\|_g^2=\sum_{q\in \mathcal{N},\;i\in\mathcal{D}}g_{q\, i}\,\|z'_{q\, i}-z_{q\, i}\|^2\;,
\end{equation}
where the symbol $z$ is being used for the aggregate of $(w,x,y)$ variables,
\begin{equation*}
\|z'_{q\, i}-z_{q\, i}\|^2=(y'_{q\, i}-y_{q\, i})^2
\end{equation*}
for $q\in \mathcal{I}$, and otherwise
\begin{equation*}
\begin{aligned}
\|z'_{q\, i}-z_{q\, i}\|^2
&={\textstyle \sum}_{p\,\in\, \to q}
\Big(
  (w'_{p\to q\, i}-w_{p\to q\, i})^2 \\
&\qquad\qquad
 + (x'_{p\to q\, i}-x_{p\to q\, i})^2
\Big) \\
&\qquad\qquad + (y'_{q\, i}-y_{q\, i})^2 \;.
\end{aligned}
\end{equation*}
is the same as expression \eqref{eq:BTFdistance} of Section \ref{sec:overview} (so that the projection to the BTF constraint is unchanged). The positive numbers $g_{q\, i}$ are the \textit{metric coefficients}. For the input nodes $q$, and all data items $i$, we set $g_{q\, i}=1$. However, we will see that allowing the metric coefficients to be nonuniform elsewhere enhances the performance of the constraint satisfaction algorithm. A simple, automated heuristic for tuning the metric is described in Section \ref{sec:RRR}. In this section we treat the metric coefficients as fixed, given numbers.

Derivations of the concur values for \eqref{eq:xyconcur}, in the case of a nonuniform metric, can be found in \cite{elser2025solving} :
\begin{equation*}
\bar{y}_{p\, i}=\frac{{\textstyle \sum}_{q\,\in\,p\to}\;g_{q\,i}\,x_{p\to q\, i}\; + \;g_{p\,i}\,y_{p\, i}}{{\textstyle \sum}_{q\in\,p\to}\;g_{q\,i}\; +\; g_{p\,i}}\;.
\end{equation*}
For the projection of the weights in the $B$ constraint one first computes
\begin{equation*}
\bar{w}_{p\to q}=\frac{{\textstyle \sum}_{i\in\mathcal{D}}\;g_{q\, i}\;w_{p\to q\, i}}{{\textstyle \sum}_{i\in\mathcal{D}}\;g_{q\, i}}\;,
\end{equation*}
and rescales this average by the factor
\begin{equation*}
\sqrt{\frac{|\!\to\! q|}{{\textstyle \sum}_{p\,\in\,\to q}\;\bar{w}_{p\to q}^2}}
\end{equation*}
to satisfy the side constraint \eqref{eq:wside}.

\subsection{Input and output constraints}

The data value imposed at network input $p\in\mathcal{I}$, for data item $i$, is written $y^A_{p\,i}$ because this constant takes the place of the projection to the BTF constraint---called $A$ above---at the input nodes (where there is no BTF). Similarly, the data value imposed at network output $q\in\mathcal{O}$, for data item $i$, is written $y^B_{q\,i}$ because this constant takes the place of the projection to the concur constraint---called $B$ above---at the output nodes (from which there are no $x$ variables to concur with). The corresponding data factors of $A$ and $B$ are simply
\begin{subequations}\label{eq:datacon}
\begin{align}
A_\mathcal{I}&=\Big\{y\in\mathbb{R}^{|\mathcal{I}|\times |\mathcal{D}|}\;:\quad y_{p\,i}=y^A_{p\,i}\Big\}\\
B_\mathcal{O}&=\Big\{y\in\mathbb{R}^{|\mathcal{O}|\times |\mathcal{D}|}\;:\quad y_{q\,i}=y^B_{q\,i}\Big\}\;.
\end{align}
\end{subequations}

In \texttt{boolearn} the data values in files are always 0 or 1, and these are converted to $-1$ and $+1$, respectively, to be the $y^A$ values used by the constraint satisfaction algorithm. 
There is one exception, when the input data is analog and sufficiently close to Boolean, like the pixel values of MNIST. In this case, \texttt{boolearn} assumes the input data lies in the interval $[0,1]$ and these are mapped linearly into the interval $[-1,1]$ to give $y^A$.

The training program in \texttt{boolearn} can be run in a generative mode, where no input data is given. Two options for constraining the inputs in that case are available. The weakest is the constraint
\begin{equation}\label{eq:inputbool}
A_\mathcal{I}=\Big\{y\in\mathbb{R}^{|\mathcal{I}|\times |\mathcal{D}|}\;:\quad y_{p\,i}\in\{-1,1\}\Big\}\;,
\end{equation}
which imposes Boolean values when there is no BTF to do that. Projecting to this constraint means rounding to $-1$ or $1$, independently for each input $p$ and data item $i$. The other option is the 1-hot constraint:
\begin{equation}\label{eq:input1hot}
A_\mathcal{I}=\Big\{y\;:~ y_{p\,i}\in\{-1,1\}\;,\; {\textstyle \sum}_{p\in\mathcal{I}}\; y_{p\, i}=2-|\mathcal{I}|,\Big\}\;
\end{equation}
where $y\in\mathbb{R}^{|\mathcal{I}|\times |\mathcal{D}|}$. To project to this constraint, independently for each $i$, the largest input $p$ is singled out and replaced by $1$, all others by $-1$.

\section{Constraint satisfaction with RRR}\label{sec:RRR}

By using the divide-and-concur strategy, in Section \ref{sec:DC} we showed how the BTF-network training problem can be cast in the form
\begin{equation}\label{eq:AcapB}
\mathrm{find\; some}\;z\in A\cap B\;,
\end{equation}
where $z$ is a point in the space of the aggregated $(w,x,y)$ variables, $A$ is the set in that space that defines the operation of BTFs, and $B$ defines the network's architecture. The training data provides additional constraints, and these are included with $A$ at the inputs and $B$ at the outputs. In this section we describe an algorithm for solving \eqref{eq:AcapB} called reflect-reflect-relax (RRR), a generalization of Fienup's hybrid-input-output phase retrieval algorithm. RRR makes no reference to the ``divide'' character of $A$ and the ``concur'' character of $B$. In fact, there are many applications of RRR where $A$ and $B$ do not have those interpretations at all.

In order to use RRR, the one property required of $A$ and $B$ is that the two projections,
\begin{align*}
P_A(z)&=\argmin_{z'\in A}\;\|z'-z\|\\
P_B(z)&=\argmin_{z'\in B}\;\|z'-z\|\;,
\end{align*}
can be computed efficiently. What we showed in Section \ref{sec:DC} is that not only can these be computed efficiently for the BTF-network training problem, but that the computations distribute, thanks to the product structures of $A$ and $B$. In the case of $P_A$, the variables assigned to each BTF in the network and each item in the data are independently projected to satisfy the $\sgn$ ``activation function'' with its lower bound on the margin. $P_B$ is distributed over all the groups of BTF outputs $y_{p\,i}$ and receiving-BTF inputs $x_{p\to q\,i}$, and also independently over the data $i$. $P_B$ also concurs the weights (across data), independently for each edge of the network.

$P_A$ and $P_B$ are computed once in each iteration of RRR, and the work is proportional both to the number of network edges and the number of data items. The work per RRR iteration is therefore comparable to the work performed in gradient descent over a single pass through the data.

For the history of RRR and the logic behind the iterations rule,
\begin{equation}\label{eq:RRR}
z\mapsto z'=z+\beta\Big(P_B(R_A(z))-P_A(z)\Big)\;,
\end{equation}
we recommend \cite{elser2025solving}. In this section we only touch on the minimum needed to use the algorithm.

It is important that the argument of $P_B$ in \eqref{eq:RRR} is not $z$ but the reflection
\begin{equation}
R_A(z)=2P_A(z)-z\;.
\end{equation}
At a fixed point $z^*\mapsto z^*$ we have
\begin{equation}
P_B(R_A(z^*))-P_A(z^*)=0
\end{equation}
which implies
\begin{equation}
P_B(R_A(z^*))=P_A(z^*)=z_\mathrm{sol}
\end{equation}
is a solution of \eqref{eq:AcapB} since it lies in the range of both projections. While this conclusion did not rely on using $R_A$, without the reflection the iterate $z$ does not converge to fixed points.

The RRR iteration rule ensures there is a kind of \textit{local convergence}, independent of whether the problem is feasible. As we will see, this is the only thing the user needs to know. Concerns about global convergence are misplaced since even truly hard problems (NP-complete) have formulations
(31) with easy projections. A proof of convergence (in polynomially-bounded iterations) in the general case would mean that even these hard problems have easy solutions.

\begin{figure}[t]
\centering
\includegraphics[width=\columnwidth]{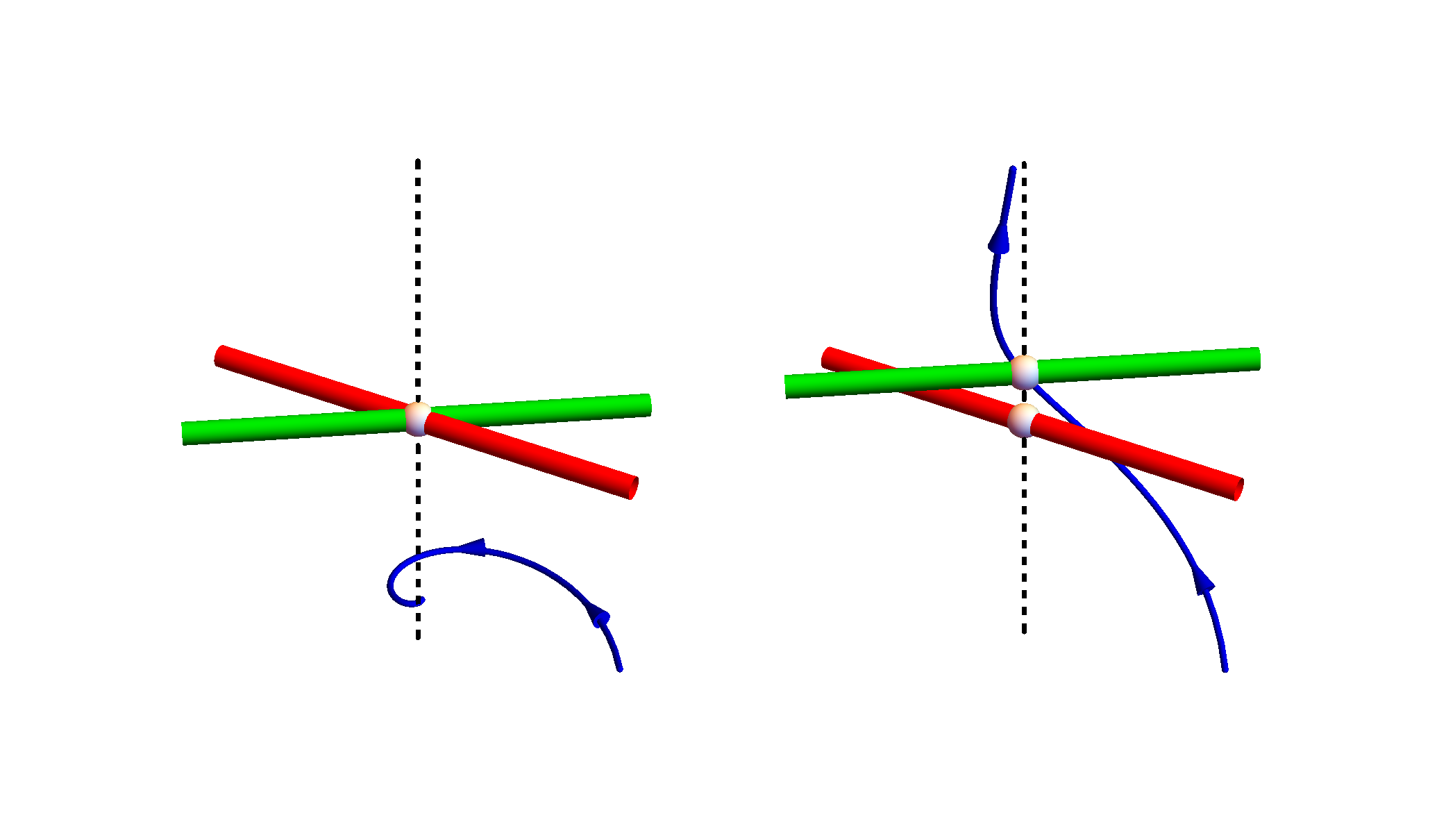}
\caption{Local behavior of the iterate $z$ (blue) in the limit of small $\beta$ when the problem is locally feasible (left) and locally infeasible (right). $A$ and $B$ can locally be approximated as affine and are  rendered as the red ($A$) and green ($B$) rods in this cartoon.}
\label{fig:convergence}
\end{figure}

A cartoon of the local behavior is sketched in Figure \ref{fig:convergence}. The trajectory of $z$ is shown for the limit of small $\beta$, where it is a smooth curve, but the convergent behavior holds for any $\beta\in (0,2)$. In both the feasible and infeasible cases there is a pair of \textit{proximal points} on the two sets. These coincide when the problem is feasible. In either case, there is an affine space orthogonal to $A$ and $B$ (dashed line) on which $z$ converges (locally). When the problem is feasible this space is the set of fixed points $z^*$ of RRR and for any of them
\begin{align*}
z_A&=P_A(z^*)\\
z_B&=P_B(R_A(z^*))
\end{align*}
are the coincident solution points. This many-to-one relationship of fixed points to solutions makes RRR different from more familiar fixed-point algorithms, like Newton's root finding method.

The local behavior in the infeasible case is the \textit{ars arcana} of RRR. The algorithm still generates the points $z_A$ and $z_B$, now from a $z$ that converges to the orthogonal space while also moving along this space in the direction $A$ (red) to $B$ (green). If $z$ is able to completely converge to the orthogonal space, then the corresponding $z_A$ and $z_B$ are a proximal pair. But if the local gap $\Delta=\|z_B-z_A\|$ is large, the speed of the iterate $\beta\Delta$ will also be large and in just a few iterations the picture of local constraints will have changed. If by luck that new local picture is feasible, then RRR converges to one of the solution's fixed points. Proximal infeasible points are the bane of the more obvious algorithm that alternates the two projections (and gets stuck on the proximal points).

The likelihood of infeasibility, globally, must be taken seriously when the data that defines $A$ and $B$ is noisy (phase retrieval) or the model is insufficiently expressive (machine learning). In this case the best that RRR can offer is a pair of proximal points whose gap $\Delta$ is exceptionally small, say relative to its starting value. Whether $z_A$ or $z_B$ should be used as the best-possible approximate solution depends on the application. 
The correct choice when training networks is $z_B$, since the weights concur over data items in set $B$.

\subsection{Metric auto-tuning}

As explained in Section \ref{sec:DC}, the two projections are just as easy to compute when the metric coefficients $g_{q\, i}$ in \eqref{eq:metric} depend on the BTF $q$ and the data item $i$. The training program in \texttt{boolearn} adjusts these in response to the values of the local gaps
\begin{equation}
\Delta_{q\, i}=\|z^A_{q\, i}-z^B_{q\, i}\|\;,
\end{equation}
where $z_{q\, i}$ are the trio of $(w,x,y)$ at BTF $q$ and data-instantiation $i$ of the network, and the superscripts denote their projections to $A$ and $B$. The rms value of these local gaps,
\begin{equation}
\Delta^2_\mathrm{rms}=\frac{1}{|\mathcal{H}\cup\mathcal{O}|\;|\mathcal{D}|}\sum_{q\,\in\, \mathcal{H}\cup\mathcal{O}}\;\sum_{i\,\in\, \mathcal{D}}\;\Delta_{q\, i}^2
\end{equation}
is the same, apart from a contribution from the input nodes, as what we have defined as the gap $\Delta$, and whose value is zero at a solution. When $\Delta_{q\, i}$ is significantly larger than $\Delta_\mathrm{rms}$, say when averaged over some number of iterations, it means the variables for BTF $q$ and data item $i$ are changing much more from iteration to iteration than the variables for other BTFs and data items. This situation will benefit from an intervention that redistributes the variable changes more evenly, since the solution process in hard problems usually involves some cooperativity. To make variables with a large $\Delta_{q\, i}$ change less than their peers, we preferentially increase their metric coefficients by the rule
\begin{equation}
g_{q\, i}\mapsto g_{q\, i}+\gamma\Big(\Delta^2_{q\, i}/\Delta^2_\mathrm{rms}-g_{q\, i}\Big)\;,
\end{equation}
where $\gamma>0$ is the \textit{metric relaxation hyperparameter}. This update is applied in every iteration and preserves the average of the metric coefficients. Initially all metric coefficients are set to the value 1, including those for $q\in\mathcal{I}$ (which are not included in the update rule). It's important to keep $\gamma$ small so as not to upset the local convergence of RRR (which holds rigorously only for a constant metric). A good rule of thumb is to set $\gamma$ near the inverse of the total number of iterations to be performed, so the net change in the metric coefficients can be of order 1. 

\section{Applications}

\subsection{Multiplier circuits}

Our first application was chosen to make it easy to check the validity of the results. The data is the complete multiplication table, in base 2, for 2-bit integers (from $0\times 0=0$ to $3\times 3=9$). The network is tasked with learning how the $2+2$ bits of the factors are transformed into the 4 bits of the product, by BTFs acting over some number of layers. For maximum interpretability we use $\sigma=3$.

\begin{figure}[t]
\centering
\includegraphics[width=\columnwidth]{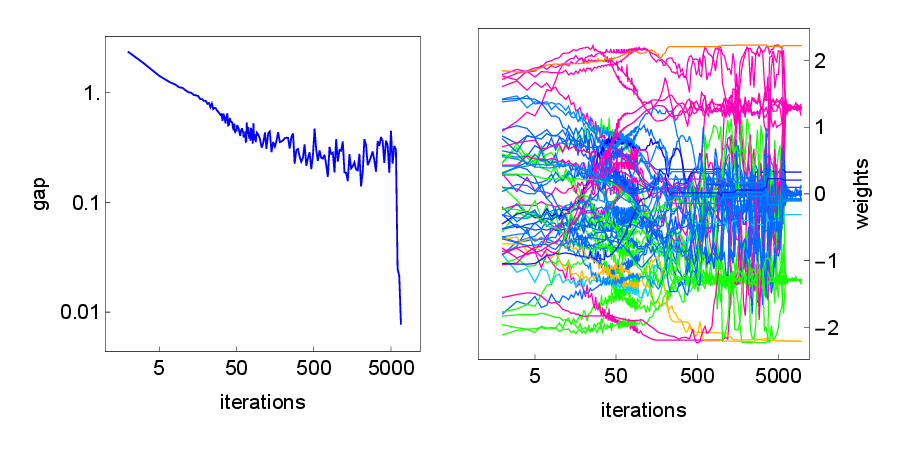}
\caption{Evolution of the gap (left) and weights (right) during training of a simple binary multiplier circuit. Though the evolution of the weights is chaotic, the values in the solution (when the gap is very small) are quite simple.}
\label{fig:3layermultplots}
\end{figure}

This problem is small enough that through experimentation we are able to trim the depth and width of (fully connected) networks to get a compact representation. The architecture $4\to 4\to 4\to 4$ appears to be the optimal 3-layer architecture because, empirically, the gap remains positive when the widths of either of the hidden layers is reduced to 3. Figure \ref{fig:3layermultplots} (left panel) shows the evolution of the gap in a run with hyperparameters $\beta=0.2$, $\gamma=10^{-3}$. These are good default values, even for much larger problems. The abrupt drop in the gap, enhanced by the logarithmic iterations axis, marks the transition from a period of search to the much faster convergent behavior to the space of fixed points. The corresponding evolution of the 60 weights in the network is shown in the right panel.

As explained at the end of Section \ref{sec:marginsintraining}, the equal-weight-magnitude conclusion of theorem \ref{thm:normmarginbound} may not hold when the BTFs do not see all patterns of Boolean values on their relevant inputs in training. Even so, we see that after just a few hundred iterations there is a concentration of weights with magnitude $\sqrt{5/3}\approx 1.29$, where $5$ is the number of inputs of our BTFs if we include the constant input that implements bias. The evidence suggests that weights are tending toward 3-input gates long before the solution is found. In the short, final convergent phase of the evolution, the conclusion of theorem \ref{thm:normmarginbound} is upheld perfectly.

Three of the weights converge to large magnitudes. If a BTF in our network had just a single nonzero weight, its magnitude would be $\sqrt{5}\approx 2.24$. The large weights in the solution can have magnitudes smaller than this without changing the 1-relevant-input property of the BTF (the weights to the non-relevant inputs will be small and are also in evidence in the plot).

The circuit found using \texttt{boolearn} is shown in Figure \ref{fig:3layermult}. Three of the BTFs (gray nodes) ended up with just one relevant weight and simply copy a BTF output from the layer below. All the others, with three relevant inputs, implement \textsc{And} (blue nodes) and \textsc{Or} (red node) because one of their relevant inputs is the constant bias node. For the circuit rendering, the signs of the weights incident to hidden nodes were flipped to minimize the number of \textsc{Not} gates (red arrows). The network's input and output nodes are labeled with the place-values of the bits in the factors and product. A few mathematical facts are brought to light by the circuit. That the 1's bit in the product is the \textsc{And} of the 1's bits in the factors is the statement that odd numbers always have odd factors. And from the structure of the first two layers we see that multiplication is indeed commutative.

The 2-bit multiplier problem is small enough that a circuit that minimizes the total number of 2-input gates (on any number of layers) can be found by exhaustive search. The result of that search (details in Supplemental Materials) is 8 gates, one less than the circuit found by \texttt{boolearn}.

\begin{figure}[t!]
\centering
\includegraphics[width=0.5\columnwidth]{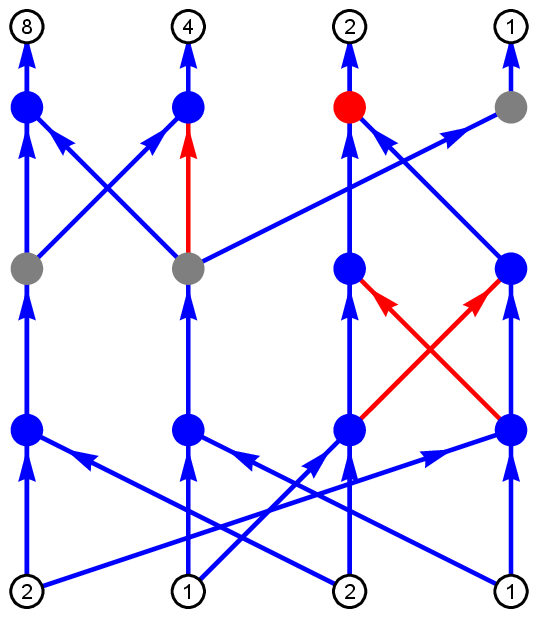}
\caption{A three-layer 2-bit multiplier circuit found using \texttt{boolearn}. There are eight \textsc{And} gates (blue nodes), one \textsc{Or} (red node), and three \textsc{Not}s (red arrows).}
\label{fig:3layermult}
\end{figure}

A smaller circuit, with architecture $4\to 5\to 4$, can be discovered by using $\sigma=5$. But admitting 5-input majority gates may increase the complexity of the representation. In any case, the exercise is instructive, now because of the three kinds of gates that can arise. In one solution found by \texttt{boolearn} there were three 5-input majority gates (all with bias), five 3-input majority gates (four with and one without bias), and only one BTF with a single relevant input.

Representations of the 3-bit multiplication table (from $0\times 0=0$ to $7\times 7=49$) should have more layers but also greater width, for storing intermediate results. Figure \ref{fig:3x3gap} compares the evolution of the gap for 3- and 4-layer architectures, both having 12 BTFs in each of their hidden layers. With the 4-layer architecture a solution is found in about $5\times 10^5$ iterations. In contrast, the gap for the 3-layer architecture appears to fall into a nonzero steady-state, casting doubt on the existence of a solution.

\begin{figure}[t!]
\centering
\includegraphics[width=.7\columnwidth]{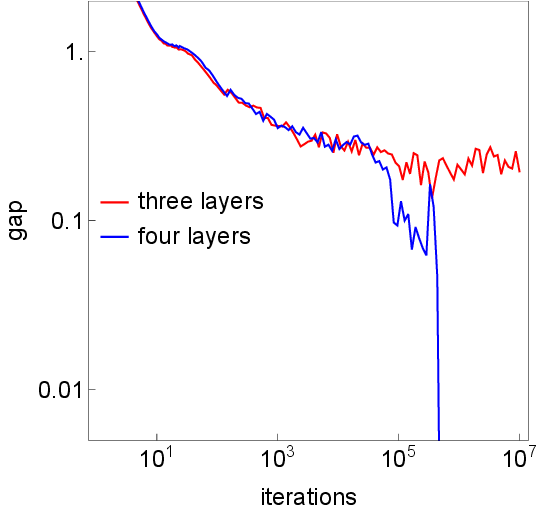}
\caption{The gap for learning the 3-bit multiplication table goes to zero for the $6\to 12\to 12\to 12\to 6$ architecture but remains positive when one layer of hidden nodes is removed.}
\label{fig:3x3gap}
\end{figure}

One would not use \texttt{boolearn} to design circuits for $k$-bit multipliers! We have no reason to believe the circuits found from data at particular $k$'s generalize to arbitrary $k$. Even so, this more limited exercise has demonstrated that \texttt{boolearn} generates interpretable representations and that the capacity for depth is not unique to the gradient approach. 

\subsection{Binary encoding-decoding}\label{sec:autoencoder}

One of the most convincing demonstrations of learning by error-back-propa\-gation in Rumelhart et al. \cite{rumelhart1985learning} was the ``binary'' autoencoder. The authors studied the (fully connected) architecture $2^k\to k\to 2^k$ and gave results for $k=3$. Learning the identity map from data comprising $2^k$ linearly independent vectors requires nonlinearities in the hidden layer, since otherwise the output can only have the linear span of the narrowest layer ($k$). Rumelhart et al. used 1-hot data vectors and achieved perfect accuracy with the sigmoid activation function. However, their results fell short of true binary encoding-decoding because on some of the data the code values (sigmoid outputs) included the number 1/2.

Unlike the gradient approach, where zero-loss only implies perfect reconstruction accuracy, when the constraint-based method
achieves a gap of zero we also know that the code is perfectly
binary. Zero-gap implies a feasible point has been found, which includes the property that the concur projections $y^B$ are equal to the Boolean $y^A$ projections. Using the same architecture as Rumelhart et al. and 1-hot data vectors (a single $+1$ replacing one element of a vector of all $-1$), the RRR algorithm easily learns weights for true binary encoding-decoding. We could present those results, not just for $k=3$ but also some larger $k$'s. Instead, we report on something more interesting.

There is no symmetry between the encoding and decoding stages of the autoencoder, but to see this asymmetry the data vectors have to be more generic than 1-hot vectors. We found that when the data vectors are drawn from the uniform distribution on the hypercube, the simple 2-layer architecture fails, even when the margin constraint is weakened with a large $\sigma$. For $k=3$ the gap fluctuates indefinitely around the value 0.2\,. Through experiment we discovered the problem is in the decoder. Augmenting just this stage, so the autoencoder has architecture $2^k\to k\to 2^k\to 2^k$, makes the constraint problem feasible.

Figure \ref{fig:autogap} shows the evolution of the gap and training accuracy with RRR iterations for generic instances of the autoencoder problem and $k=3,4,5$. The hyperparameters in these experiments were $\sigma=7$, $\beta=0.2$ and $\gamma=10^{-4}$. The training accuracy is the fraction of output bits that are correct (match the corresponding input bits). This reaches 100\% when the gap undergoes a sharp drop. That a gap of $0.01$ remains a reliable stopping criterion, even when networks are doubled in size, adds support to our normalization conventions.

\begin{figure}[t!]
\centering
\includegraphics[width=\columnwidth]{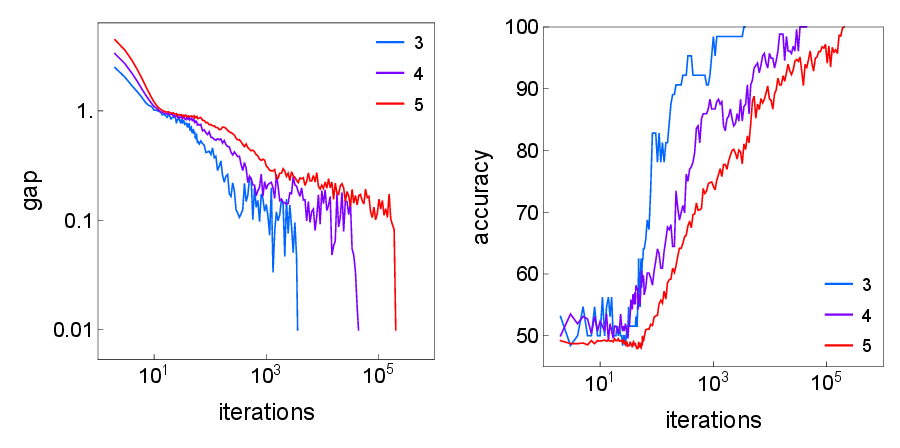}
\caption{Evolution of the gaps (left) and training accuracies (right) in the binary autoencoder problem for $k=3, 4, 5$.}
\label{fig:autogap}
\end{figure}

The role of the extra processing layer in the decoder is a mystery to us. Figure \ref{fig:decode1} compares our $k=5$ random data vectors with the vectors generated by the first stage ($k\to 2^k$) of the decoder. Clearly there is something special about the latter, since the 2-layer architecture ($2^k\to k\to 2^k$) would have worked with these as the data. We confirmed this and found that solutions were found on average in only 900 iterations. It would be interesting to know what property allows the transformed data to be decoded in just one layer.  

\begin{figure}[t!]
\centering
\includegraphics[width=\columnwidth]{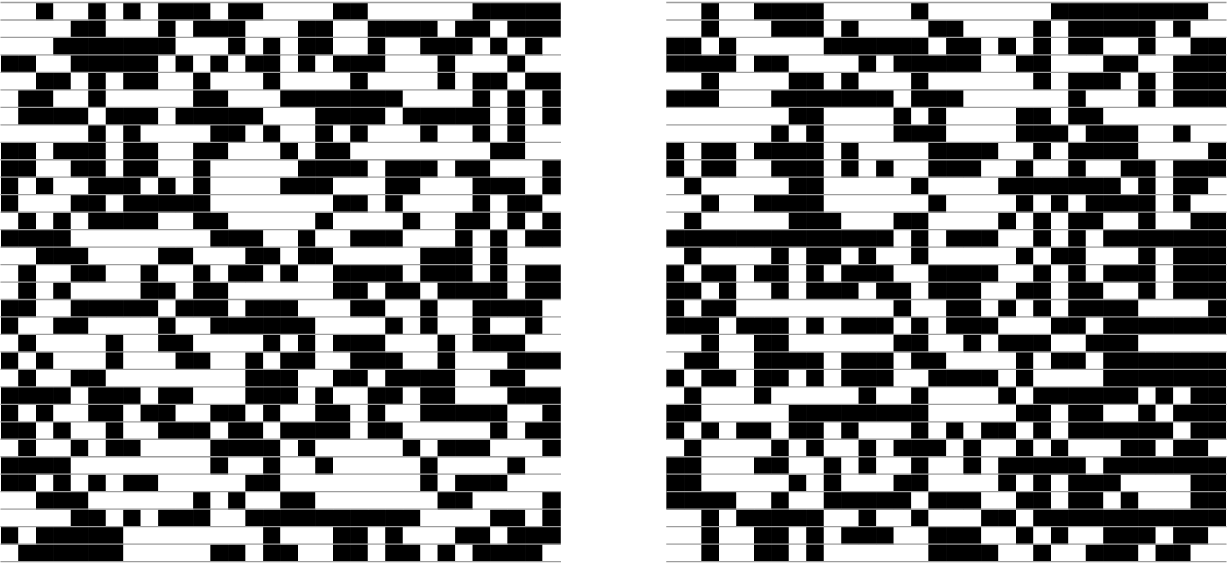}
\caption{Generic (left) and transformed (right) data vectors (rows) for the $k=5$ binary autoencoder problem.}
\label{fig:decode1}
\end{figure}

A decoding network can be learned directly in \texttt{boolearn} by using the ``any Boolean code'' option---constraint \eqref{eq:inputbool}---for the inputs. The network now has architecture $k\to 2^k\to 2^k$ and instead of imposing data values at the input nodes in the $A$ projection, the $y$ variables are simply rounded to $-1$ or $+1$, whichever is closer. \texttt{boolearn} keeps track of which input codes are getting used, and with what frequency. Since we are training on $2^k$ random and distinct data vectors in the output, each of the $2^k$ binary codes must be used exactly once in a solution.

Keeping $\beta=0.2$ and $\gamma=10^{-4}$, we were surprised to find that training the decoder, for $k=5$, was much easier with a \textit{smaller} value of $\sigma$. Limiting each solution attempt at $2\times 10^6$ iterations, the success rate was 100\% in 20 attempts for $\sigma=5$ (average iteration count $8.6\times 10^5$), and 0\% for $\sigma=7$. The success rate at $\sigma=3$ was also 0\%, but that is explained if the large BTF margin makes the constraint problem infeasible. On the other hand, any solution found with $\sigma=5$ would still be feasible with the smaller margin-bound setting of $\sigma=7$. This finding indicates that tightening the constraints and thereby decreasing the size of the feasible set can make it \textit{easier} for the RRR algorithm to find a point in that set.

\subsection{Generating MNIST 4's}

Data taken from the wild usually does not come with the guarantee it can be generated by a network of BTFs. To show how \texttt{boolearn} can be used in that setting, we turn to MNIST, a repository of representations of the digits 0--9 used by early humans. We selected just the 4's, cropped the images to $16\times 16$, and binarized them. The last step is straightforward since the pixel contrast distribution is strongly bimodal. To generate 1024 of the resulting binary strings (each of length $16^2$) with a binary code, as in the last application, would require a 10-bit code. But there is no reason at all to use the hypercube code for this data, with its large range of distances between codewords. We will instead use the 1-hot code and input-constraint \eqref{eq:input1hot}. The 1-hot codewords are equidistant, and if we want to try to generate all the data with $k$ codewords, our network will have $k$ inputs. Since the data vectors are distinct, our representation of MNIST 4's will be approximate in some sense because the constraint problem is infeasible. The RRR algorithm finds proximal point-pairs when a problem is infeasible and it will be interesting to see what that means in this application.

\begin{figure}[t!]
\centering
\includegraphics[width=.6\columnwidth]{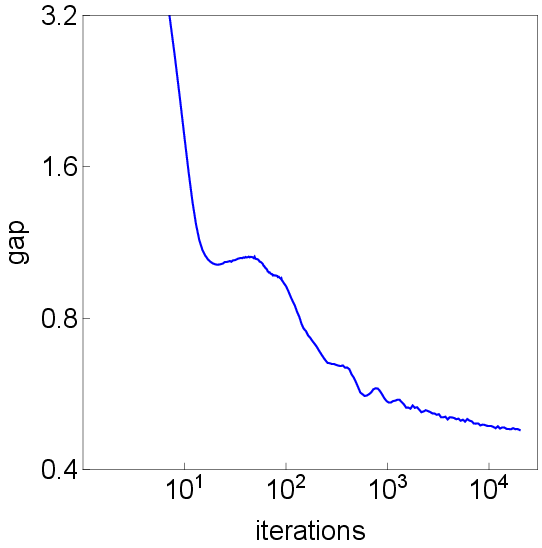}
\caption{The gap when training a generator of MNIST 4's from just 16 1-hot codewords.}
\label{fig:mnist4gap}
\end{figure}

Using the same extra-layer decoder design that worked well in the previous application, to build a 16-codeword representation we use a network with architecture $16\to 256\to 256$. Figure \ref{fig:mnist4gap} shows the evolution of the gap when training on 1024 data (in a single batch) with hyperparameters $\sigma=5$, $\beta=0.2$ and $\gamma=10^{-3}$. The gap has saturated at a value above 0.4. There is clearly no point in performing more iterations.

Recall that in the infeasible case one of the proximal points is on set $A$, the other on set $B$. In the divide-and-concur setup, the point on $B$ comes closest to what would be considered an approximate solution because the weights concur across all the data items and the network with those weights is a representation in the usual sense. The companion proximal point, on set $A$, has the property that the network's 1024 instantiations has 1024 outputs, one for each data item. For the point on $B$ to be close to the point on $A$, the 16 ``archetype'' data vectors generated by the concurring weights (of point $B$, from each of the 16 codewords) try to be close to those 1024 data vectors. The proximal pair found by RRR corresponds to an optimized partition of the 1024 data into 16 subsets, with the data in each subset close to one of the 16 data generated by the model. The quality of the partition can be evaluated by the average reconstruction accuracy (fraction of bits in the images of the partition matching the bits of the archetype). The accuracy of the method just described is 81.0\%. For the same task, the binary $k$-modes algorithm \cite{huang1998extensions}
with $k=16$ achieves 82.6\% (details in Supplemental Materials).

Figure \ref{fig:mnist4pix} shows the 16 4's generated by the model. \texttt{boolearn} also outputs how often each codeword was used in the approximate solution. The images in the figure are ranked by decreasing frequency, from 99 times at the top left, to 23 times at the bottom right. The shapes of these archetype 4's are qualitatively reproduced when RRR is rerun with a different random initialization, as are their frequencies. The highly slanted form is always the least frequent.

\begin{figure}[t!]
\centering
\includegraphics[width=\columnwidth]{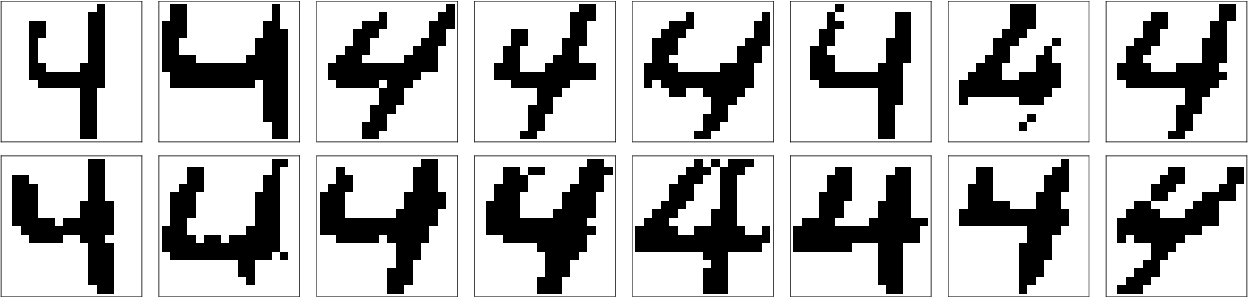}
\caption{The 16 archetype 4's generated by the 1-hot codeword model when trained on 1024 examples. The images are ranked by frequency, from top-left (highest) to bottom right (lowest).}
\label{fig:mnist4pix}
\end{figure}

\subsection{Analog data and MNIST}
\label{sec:mnist-analog}

We use the MNIST dataset to show that the discrete BTF outputs are not an impediment to learning analog data. This exercise also provides another example of the power of the margin hyperparameter and letting proximal point-pairs be the endpoint of learning.

To enhance the analog character of the data we use the down-sampled $8\times 8$ images available at \cite{alpaydin1998opticaldigits}. Samples of the ten digits are shown in Figure~\ref{fig:mnistjrdata}. The perceptron model with architecture $64\to 10$ is already interesting in that our method uses a very different approach to learning the weights in the model. In this no-hidden-units network there are just 10 BTFs, one for each digit class. The data constraints \eqref{eq:datacon} impose continuous $y$ values at the input nodes in the $A$ constraint and $\pm 1$ class-encoding values on the $y$'s at the output nodes in the $B$ constraint. The single correct class node has value $+1$ and the nine others are set at $-1$.

\begin{figure}[b!]
\centering
\includegraphics[width=0.8\columnwidth]{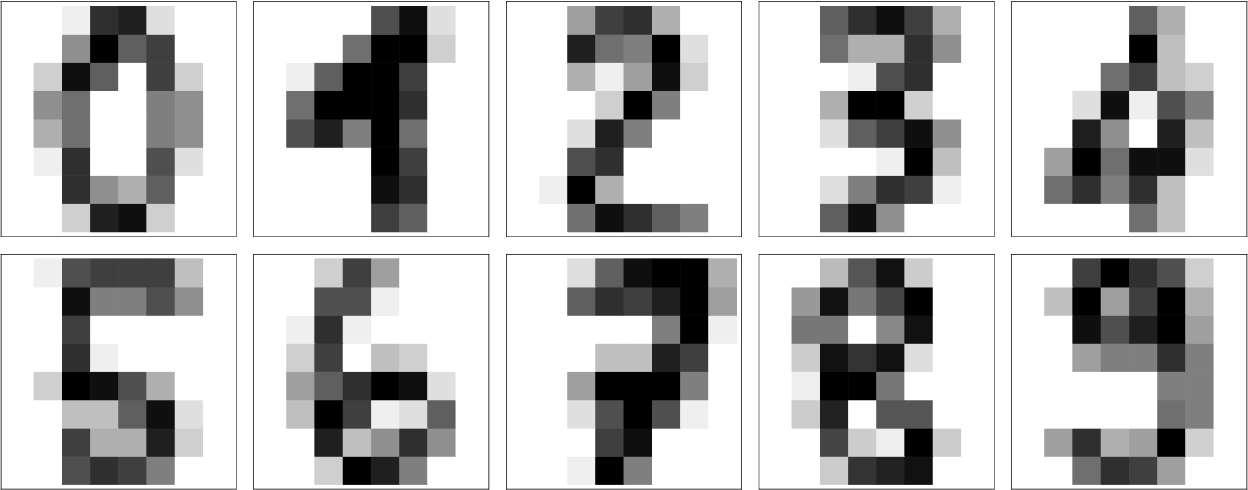}
\caption{Down-sampled MNIST digits \cite{alpaydin1998opticaldigits}.}
\label{fig:mnistjrdata}
\end{figure}

Figure~\ref{fig:mnistjrgap} shows the evolution of the gap over $2\times 10^4$ iterations when training on 128--2048 items. Since the algorithm is not faced with a combinatorial challenge (multiple near-solutions) we set $\gamma=0$. Also, by using the small time-step $\beta=0.05$ there is better convergence to proximal point-pairs---no hurry to get extricated from near-solution traps. The margin hyperparameter, given by $\mu=\sqrt{65/\sigma}$, has the expected effect on the gap. A small margin, with $\sigma=100$ (left panel), puts a feasible point almost within reach and the gap attains small values. When the margin is large, with $\sigma=1$ (right panel), the algorithm is forced to find a proximal point-pair.

\begin{figure}[t!]
\centering
\includegraphics[width=\columnwidth]{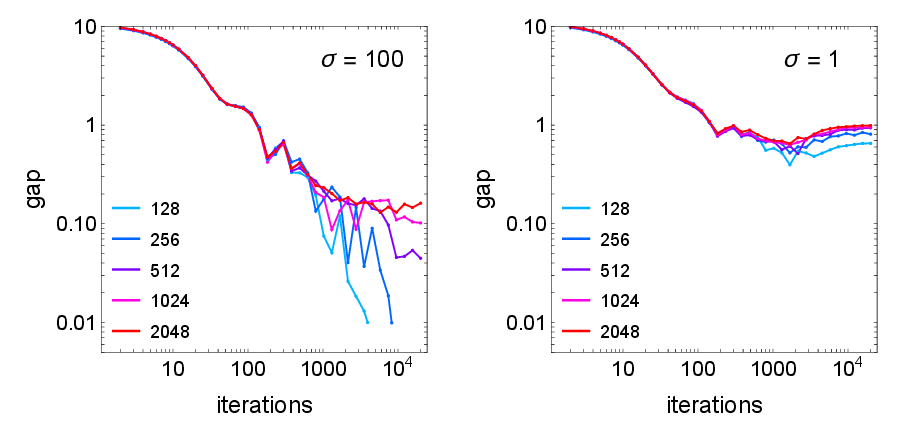}
\caption{Effect of the $\sigma$ hyperparameter on the gap when learning to classify MNIST data for various numbers of training items.}
\label{fig:mnistjrgap}
\end{figure}

\begin{figure}
\centering
\includegraphics[width=\columnwidth]{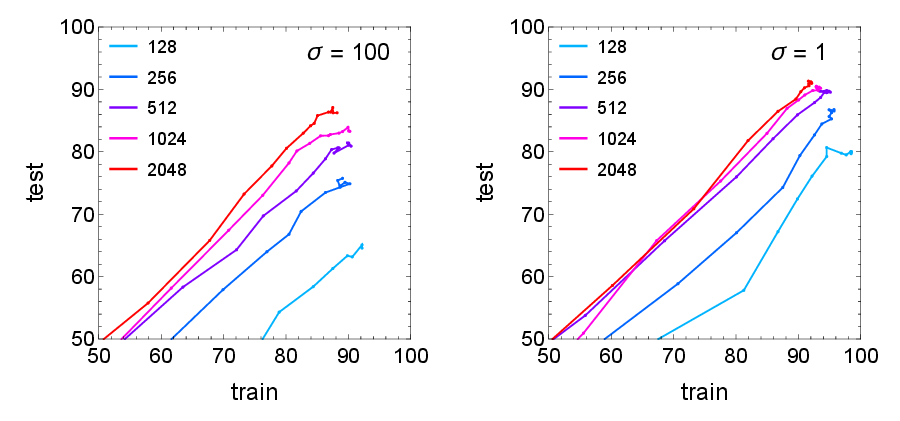}
\caption{Detail of the classification-accuracies corresponding to the training runs in Figure~\ref{fig:mnistjrgap}.}
\label{fig:mnistjracc}
\end{figure}

The corresponding accuracy plots in Figure~\ref{fig:mnistjracc} make the case that finding a good proximal point-pair is the superior strategy for this kind of data. Recall that the point on the $B$ constraint defines the solution because the weights concur across all the training items. The class prediction is defined by the output-BTF having the largest value of $w\cdot y$, where $y$ is the vector of grayscale pixel values in a train/test item. Both plots show the learning curves moving toward the main diagonal as the number of training items is increased. But the test accuracy is in general significantly greater for $\sigma=1$ than it is for $\sigma=100$. Increasing the BTF margin---by a factor of 10---thereby making the constraint problem infeasible, improved generalization by increasing the separation between the true class and all the wrong classes.

The learned weights, shown in Figure~\ref{fig:mnistjrwgts}, are also interesting. Negative values, shown as red, are just as prominent in the centers of the images as the positive values in blue. Apparently good generalization is just as much about eliminating wrong classes as identifying the correct one. Every detail of Figure~\ref{fig:mnistjrwgts} is reproduced when the algorithm is rerun from a different random start. The optimal weights for the BTF-perceptron enjoy the same uniqueness, empirically, as the weights obtained in convex optimization. 

\begin{figure}[t!]
\centering
\includegraphics[width=0.8\columnwidth]{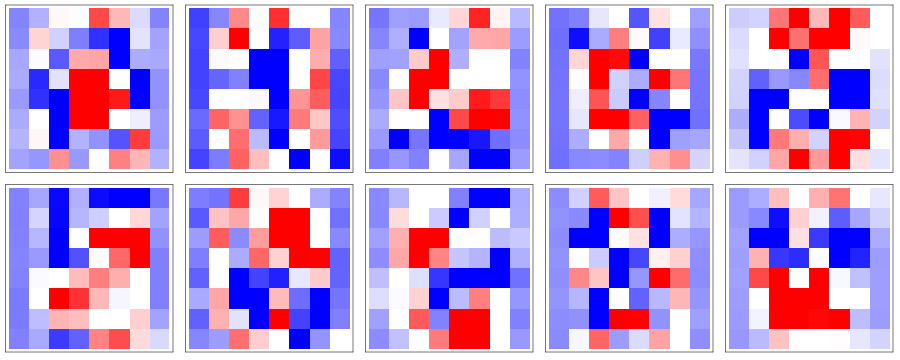}
\caption{The learned MNIST weights for the ten digit classes (in the same order as the samples in Figure~\ref{fig:mnistjrdata}) when training on 2048 items with $\sigma=1$. Blue/red denote positive/negative values.}
\label{fig:mnistjrwgts}
\end{figure}

\subsection{Logic circuits}
\label{sec:randlogic}

We finally come to the kinds of applications where we believe BTF networks can have the greatest impact. The foregoing applications served to contrast the new and existing approaches, while also showing the constraint-based method is not all that different from the perspective of the user. In this section we turn to applications where Boolean representations are natural and have the most to gain from implementations by BTF networks.

Learning Boolean-native data was previewed at the end of Section \ref{sec:overview}. There the BTF networks served as ``Boolean inference scratchpads'' on which logical rules connecting inputs and outputs were reconstructed. As in any scratchpad, the reconstructions are not unique. For example, logical inferences from the lower layers can simply be copied to higher layers for later use. Given enough layers, arbitrarily complex Boolean functions can be expressed on such scratchpads.

The obvious application that can take advantage of a Boolean representation is language. Words are built from a stem, or root, and various affixed morphemes that modify the meaning and grammatical function. The latter do not form a continuum, but are either absent or present. Clearly a \texttt{word2bits} embedding strategy is more natural than continuous embeddings, like \texttt{word2vec}. Moreover, if the ``atoms'' of language are Boolean, then grammar and phrase structure are best represented by Boolean functions. At the highest level is the task of reasoning---the original motivation for the whole Boolean formalism.

The only thing that has kept representations from being Boolean is the reliance of learning algorithms on gradient-based optimizers. For the RRR optimizer, acting on the divide-and-concur variables of a BTF network, discreteness of the representation does not pose a problem. In this section we study just this technical question and leave language modeling for the future.

The circuits shown in Figure \ref{fig:randlogic} implement Boolean functions of the form $f:\{0,1\}^m\to\{0,1\}^m$ that are compositions of simple ``gates'' in $\ell$ layers. BTFs are linearly separable gates, and the $\sigma$ parameter controls their complexity. We used $\sigma=3$ in our experiments, which limits the BTFs to the 1-input \textsc{Copy} gate, 2-input \textsc{And/Or}, or 3-input \textsc{Maj}. Our circuits were drawn from uniform distributions so that for large $m$ two random instances have the same behavior (and it suffices to study just one). We considered two distributions, both of which used the \textsc{Copy} gate with probability 1/2. The other gates were always \textsc{And/Or} in one distribution, \textsc{Maj} in the other. Gate-input nodes from the layer below were drawn from the uniform distribution (of singles, doubles, or triples) with one proviso: no dead-ends in the layer below. This just means the Boolean value at every node gets used by some gate in the layer above. \textsc{Not} was applied with probability 1/2 at every edge (gate-input), including the constant network-input used by the \textsc{And/Or} gates.

The learning curves plotted in Figure \ref{fig:accplots} suggest there is a transition to perfect generalization when the number of training items $N_\mathrm{data}$ exceeds some threshold. We can test this hypothesis by comparing the number of circuits with parameters $(m,\ell,\sigma)$ and the bits of information provided by $N_\mathrm{data}$ items. The number of circuits $N_\mathrm{circuit}$ allowed by the margin constraint with $\sigma=3$ is easy to calculate when the gate inputs are sufficiently diverse so that condition \eqref{eq:supportbound} holds:
\begin{equation*}
N_\mathrm{circuit}(m,\ell,3)=\left(2^1\binom{m+1}{1}+2^3 \binom{m+1}{3}\right)^{\ell m}\;.
\end{equation*}
The two terms correspond to the $n=1$ and $n=3$ relevant inputs allowed by $\sigma=3$ and the $m+1$ node choices include the constant network-input node (used by \textsc{And/Or}).

The number of distinct Boolean functions $f$ that can be implemented on a circuit with parameters $(m,\ell,3)$ is smaller than $N_\mathrm{circuit}(m,\ell,3)$ because implementations are far from unique. The greatest source of multiplicity comes from the fact that the nodes within each hidden layer can be freely permuted (without changing edge connectivity) and negation, when applied to all of a hidden BTF's inputs and outputs, also has no effect on $f$. Together, these symmetries give the following lower bound on the multiplicity:
\begin{equation}\label{eq:circuitmult}
N_\mathrm{mult}(m,\ell)=\left(m!\; 2^m\right)^{\ell-1}\;.
\end{equation}
The number of bits of information required to specify our class of Boolean functions is therefore
\begin{equation*}
h(m,\ell,\sigma)=\log_2 N_\mathrm{circuit}(m,\ell,\sigma)-\log_2 N_\mathrm{mult}(m,\ell)\;.
\end{equation*}
We will treat this as an estimate since the first term is valid only for sufficiently diverse inputs while the multiplicity \eqref{eq:circuitmult} is only a good lower bound. It evaluates to
\begin{equation*}
h(32,5,3)=2466.5-598.7=1867.8
\end{equation*}
for the parameters in our experiments.

The training algorithm can only acquire these bits of information from the values of the $m$ outputs bits in each data item. But the information rate is less than $m$ bits per data item, since these are not uniformly distributed. For example, depending on the circuit, some bits may be constant or exactly correlated or anticorrelated with other bits. If $p(x')$ is the probability distribution of output bits $x'\in\{0,1\}^m$ given by $x'=f(x)$, when the inputs $x$ are uniformly sampled, the information per data item is given by the entropy formula:
\begin{equation*}
s(p)=-\sum_{x'\in\{0,1\}^m}p(x')\log_2 p(x')\;.
\end{equation*}
From the distributions $p$ produced by feeding $10^5$ uniform input samples into the two 5-layer, width-32 circuits described above, we obtained
\begin{align*}
s(\mbox{\textsc{And/Or}})&=10.6\\
s(\mbox{\textsc{Maj}})&=14.2\;.
\end{align*}
We then arrive at the following estimates for the number of data items required to specify the two circuits:
\begin{align*}
h(32,5,3)\;/\;s(\mbox{\textsc{And/Or}})&=176\\
h(32,5,3)\;/\;s(\mbox{\textsc{Maj}})&=132\;.
\end{align*}
The transition (Figure \ref{fig:accplots}) to apparent perfect generalization falls between 128 and 256 data items for both data sets, consistent with these estimates.

Of course without access to testing data one cannot detect a transition from plots such as Figure \ref{fig:accplots}. Plots of gap vs. iterations can be used instead. Figure \ref{fig:a5gaptrans} shows on the left the evolution of the gap with increasing data (32, 64, 96, 128) when the amount of data is insufficient for perfect generalization. Increasing the data makes the under-constrained problem harder, and more iterations are required to reach a low gap. We have plotted ``min-gap'', the lowest currently achieved gap, as this does a better job conveying the evolution. The plots on the right show the evolution when the amount of data is sufficient for perfect generalization (256, 320, 384, 448). Initially (few iterations) the gap is increased with an increase in the data. However, we see that eventually this trend is reversed: increasing the data makes the over-constrained problem easier!

\begin{figure}[t!]
\centering
\includegraphics[width=\columnwidth]{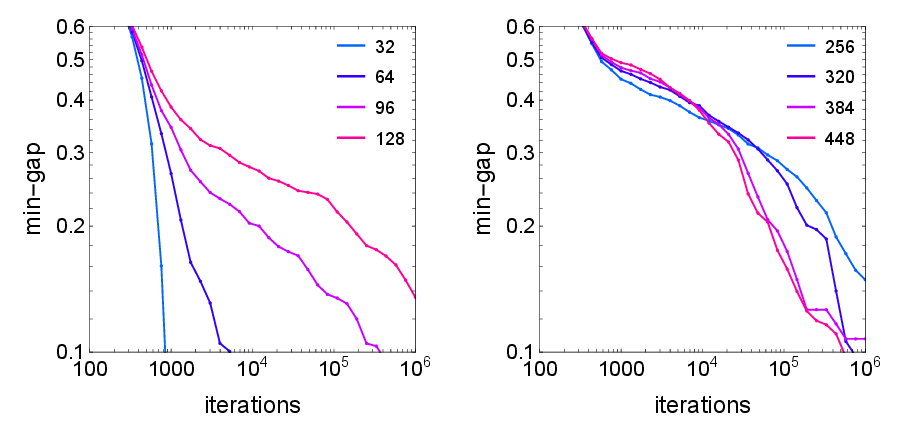}
\caption{Minimum value of the gap achieved with increasing iterations for the 5-layer \textsc{Copy/And/Or} circuit data. On the left is shown the evolution with increasing data in the under-constrained case. As shown on the right, the evolution changes character in the over-constrained case.}
\label{fig:a5gaptrans}
\end{figure}

We should not lose track of the fact that good generalization hinges very much on the choice of model. In this application the model is completely specified by just three parameters: $(m, \ell, \sigma)$. We believe that decreasing any of these from $(32, 5, 3)$ makes the constraint problem infeasible with sufficiently many data, and there would be no transition of the kind described above. Moreover, the quality of generalization is independent of the constraint-solver's hyperparameters, of which there are just two: $\beta$ and $\gamma$. These control the local dynamics of the search (depicted in Figure \ref{fig:convergence}), and not so much the endpoint of the search. In this application we used $\beta=0.2$ and $\gamma=10^{-3}$ in all the experiments.

\subsection{Cellular automata}
\label{sec:cellauto}

In the Boolean scratchpads we might use to learn some data, there is a tradeoff between width and depth (number of layers). A nice example of this tradeoff is the learning of cellular automata data.
Automata data differs qualitatively from  random circuit data in that each output bit is a function of just a small number of the input bits. This property may explain our finding that a sparsity-enhancement of the gradient-descent method has a striking positive effect.

Here we consider automata on a 1D periodic world where each cell applies a particular Boolean rule \textsc{R} to itself and the two cells adjacent:
\begin{equation*}
x(p,t+1)=\mbox{\textsc{R}}\big(\,x(p-1,t)\,,\,x(p,t)\,,\,x(p+1,t)\,\Big)\;.
\end{equation*}
The positions $p$ are periodic with period $L$ (the size of the world) and $t$ is the time. Our results are for Wolfram's ``chaotic'' rule-30 with formula
\begin{equation*}
\mbox{\textsc{R}}_{30}(x,y,z)=x\oplus(y\lor z)\;,
\end{equation*}
where $\oplus$ and $\lor$ are respectively the exclusive and regular \textsc{Or} operations.

Many of Wolfram's $2^8$ rules (on three inputs) can be expressed by a single BTF with negations, but rule-30 is not one of them. Expanding by the two \textsc{True} (or \textsc{False}) cases of the exclusive \textsc{Or}, rule-30 can be expressed in terms of \textsc{And/Or} as follows:
\begin{align}\label{eq:R30a}
\mbox{\textsc{R}}_{30}(x,y,z)
&=\mbox{\textsc{Or}}\Big(
    \mbox{\textsc{And}}\big(x,-\mbox{\textsc{Or}}(y,z)\big), \nonumber\\
&\qquad\qquad
    \mbox{\textsc{And}}\big(-x,\mbox{\textsc{Or}}(y,z)\big)
  \Big)\\
&=\mbox{\textsc{And}}\Big(
   -\mbox{\textsc{And}}\big(x,\mbox{\textsc{Or}}(y,z)\big), \nonumber\\
&\qquad\qquad
   -\mbox{\textsc{And}}\big(-x,-\mbox{\textsc{Or}}(y,z)\big)
 \Big)\;.
\end{align}
We have switched to $\pm 1$ for the Boolean values, where negation is multiplication by $-1$. This can be expressed on a BTF network with two sets of intermediate values, the first pair holding (a copy of) $x$ and $\mbox{\textsc{Or}}(y,z)$, and the second pair the two \textsc{True} (or \textsc{False}) cases of the exclusive \textsc{Or}. To implement one time step of rule-30 on a world of size $L=16$, a 3-layer BTF network with architecture
\begin{equation}\label{eq:30step1arch}
16\to 32\to 32\to 16
\end{equation}
suffices.

The left panel of Figure~\ref{fig:30step1acc} shows the evolution of train/test accuracies for 64--160 training data on the architecture \eqref{eq:30step1arch}. It appears that 96 training data are sufficient to learn rule-30 given enough RRR iterations. As in the logic circuit data experiments we used $\beta=0.2$ and $\gamma=10^{-3}$. In the right panel are results, for the same number of steps as iterations, of the gradient-descent method. To our surprise, 100\% test accuracy was achieved with 96 and 128 training items! The evolution is strange and related to a sparsity enhancement that was also tried on the logic data but without success.

\begin{figure}[t!]
\centering
\includegraphics[width=\columnwidth]{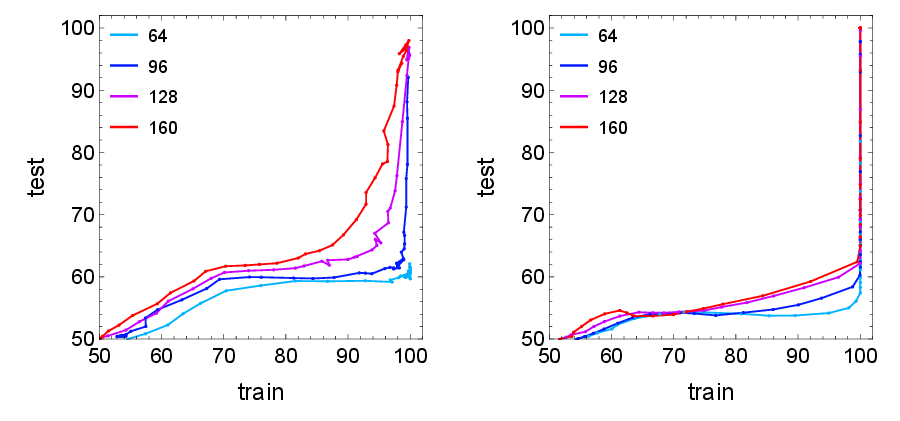}
\caption{Evolution of train/test accuracies for learning one step of the rule-30 automaton. The constraint-based results are on the left, gradient-descent with sparsity enhancement is on the right. Both methods used $10^6$ steps/iterations.}
\label{fig:30step1acc}
\end{figure}

\begin{figure}[b!]
\centering
\includegraphics[width=\columnwidth]{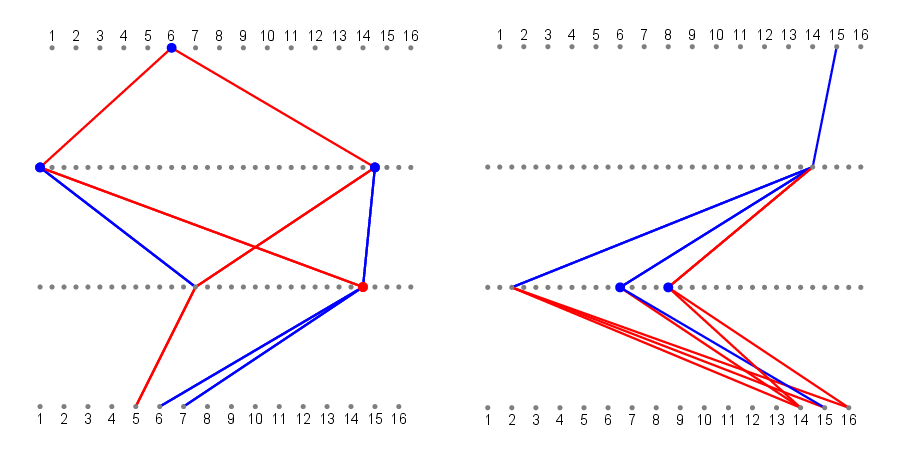}
\caption{By thresholding weights by magnitude, to identify the relevant BTF inputs, one can see rule-30 implemented in various ways.}
\label{fig:rule30traces}
\end{figure}

To promote sparsity one can add an $\ell_1$ penalty on the weights to the model's loss function. And since each neuron's input-weights should independently be sparse---like our ``support'' constraint---we also impose constraint \eqref{eq:weightnormalization}. Complete details are given in appendix~\ref{sec:gd_baselines}. By comparing the evolution of the two kinds of loss (binary-cross-entropy at the outputs, sparsity-penalty thoughout the network), it appears the optimizer first manages to perfectly reproduce the correct output bits---without regard to sparsity. In a longer second stage, with training accuracy holding steady at 100\%, the sparsity-penalty is reduced as well, until rather abruptly the rule is discovered.

Interpreting the solution-weights to read out the auto\-maton rule is easy in the constraint-based method. This is shown in Figure~\ref{fig:rule30traces} where, on the left, all the weights above some threshold in magnitude are traced that descend from output bit 6. That includes weights to the network's constant-input node (not shown). When above the threshold and positive, the node is colored blue and the BTF at that node implements \textsc{And}. Red nodes have the other sign of weight to the constant-input node and implement \textsc{Or}. The weight-trace shows that output node 6 only depends on input nodes (5, 6, 7),  and the Boolean function being implemented exactly matches \eqref{eq:R30a} after the three edges incident to the single \textsc{Copy} node are negated.

Interestingly, from the trace of node 15, shown in the right panel of Figure~\ref{fig:rule30traces}, we see that rule-30 can also be implemented as
\begin{equation*}
\mbox{\textsc{R}}_{30}(x,y,z)=\mbox{\textsc{Maj}}\big(-\mbox{\textsc{Maj}}(x,y,z),\mbox{\textsc{And}}(-x,y),\mbox{\textsc{Or}}(x,z)\big)
\end{equation*}
after two applications of De Morgan's rule (to flip the colors of edges). In this implementation the width is greater by one (three nodes instead of two), while the depth is reduced by one (only two layers). The architecture 
$16\to 48\to 16$ could therefore have been used instead. Indeed, learning rule-30 on this architecture is somewhat easier for our method.

\begin{figure}[t!]
\centering
\includegraphics[width=\columnwidth]{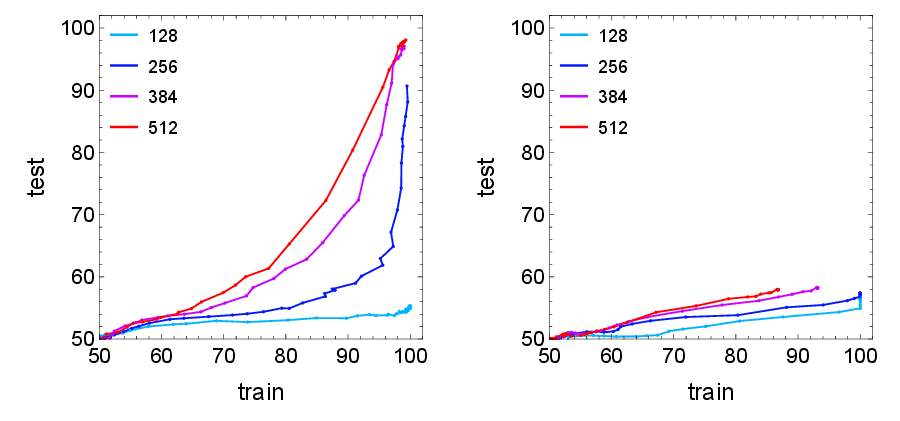}
\caption{Same as Figure~\ref{fig:30step1acc} but for data derived from two steps of rule-30.}
\label{fig:30step2acc}
\end{figure}

Learning rule-30 from input-output patterns separated by two steps of the rule is harder because each bit in the output is now a function of five input bits. From what we just learned about implementing one step of the rule, implementing two steps should be possible with architecture
\begin{equation*}
16\to 48\to 48\to 48\to 16\;.
\end{equation*}
Figure~\ref{fig:30step2acc} shows how the two methods compare on data from two steps of the rule. The iteration/gradient-step limit was doubled relative to the 1-step data on architecture \eqref{eq:30step1arch}, and sparsity enhancement was used as before in the gradient-descent method. The gradient-descent method is now struggling to even achieve 100\% training accuracy.

\section{Next steps}
\label{sec:next}

Gradient-descent on a loss-objective, and iterated projections to divided BTF constraints, are two strategies for learning with networks. The work for both methods scales in the same way:
\begin{equation*}
\frac{\mbox{computations}}{\mbox{gradient-step/RRR-iteration}}
\;\propto\;
E\times N_{\mathrm{data}}\;.
\end{equation*}
Here $E$ denotes the number of network edges and $N_{\mathrm{data}}$ the number of data items. The number of gradient steps, or the number RRR iterations, needed for learning depends on the nature of the data, and not much can be said about that. On the other hand, because the constraint approach is able to use BTFs, that strategy has the upper hand when we want the representation to be Boolean.

The advantage of a Boolean representation cannot be overstated. When implemented with BTF networks, and a sufficiently large margin, the BTFs are equivalent to BTFs with very simple weights! As explained in Section \ref{sec:replogicgate}, even for the generous support parameter $\sigma=9$, the equivalent BTFs all have $\pm 1$ weights with just one exception. Most of the power consumed by data centers is for inference---e.g. ``In \textit{Learning representations ...} did Rumelhart et al. use data batches?'' \footnote{It appears MS-Copilot did not read the Nature paper! According to Copilot, weights were updated after each example, but the paper clearly states the summed gradient of all the examples was used in the updates.}---and this would be reduced enormously if multiplications are eliminated. Inference in networks with $\pm 1$ weights uses only integer addition.

In training, the tighter hypothesis classes made possible by low-complexity (large-margin) BTFs promotes generalization. Power consumption is reduced because the same data can be learned from fewer examples. Strategies like ``dropout'' \cite{srivastava2014dropout} for avoiding overfitting in gradient-trained networks seem barbaric in comparison. 

Because the constraint-alternative operates at the lowest level of machine learning technology, many of the advances at the higher levels---all trailblazed with gradient-descent---might go through mostly unchanged. Because the representations will become Boolean, there will surely be differences. We hope this article has shown that a core technology based on constraints is at least worth exploring. Below are what we see as the next steps in functionality, implementation, infrastructure, and theory.

\subsection{Functionality}\label{sec:functionality}

\texttt{boolearn} is demonstration software and its programs were deliberately kept simple for that reason. Below are four features that will improve its functionality:

\textit{Weight sharing}\; This feature is easily implemented in the $B$ constraint. Weights are always required to concur across data, but additional equalities can be imposed for other reasons, as in convolutional filters. 

\textit{Nonuniform margins}\; There is no  reason to impose the same margin constraint on all the BTFs of the network (as in this work). For example, the margins in the encoder and decoder stages of the autoencoder (Section \ref{sec:autoencoder}) should really have been set to different values.

\textit{Batch mode}\; Whereas the constraint approach to optimization does not rely on ``stochasticity'' conferred by small data batches, the sheer size of data in many applications makes training---on as many network instantiations as there are data---completely infeasible. A practical solution is to use the largest size batch that can be implemented in hardware, and update the batch in each iteration by replacing the longest residing data item with a new item. With enough iterations all the data ends up getting used, in a cyclic order, where each item's residence time in the optimizer is equal to the size of the batch.

\textit{Noise accommodation}\; In phase retrieval, where the data constraints are derived from continuous measurement values (X-ray intensities), solutions are always near-solutions with exceptionally small gaps. The small-gap strategy also works when the data is discrete, and only few bits are corrupted. But there is a better alternative. The trick is to make an allowance for some fraction of the data bits to be flipped from their correct values. In the projection to the data constraint one identifies all the $y$ variables that are closer to the flipped bit. Of these the allowed fraction are sorted by absolute value and the largest (who change least when the flipped value is indeed the correct one) are projected to the flipped bit. Not only does this make infeasible problems feasible, the corrupted data is flagged in the solution.

\subsection{Implementation}

The current version of \texttt{boolearn} runs on a single core and all the programs are written in C. Before being considered an alternative to gradient-methods, the following are necessary steps:

\textit{Distributed processing}\;
The independence of the variables associated with each node in both constraint projections, together with the existing low-level C implementation of these projections, makes the framework naturally amenable to parallel and distributed execution. This structural separability has been discussed across multiple formulations and analyses \cite{elser2021learning,lal2023nonconvex,lal2025flow,bergmeister2025projection,lal2025backpropagation}. We plan to provide an explicit distributed implementation, and we expect that a variety of parallel and distributed realizations will emerge, tailored to different application domains and hardware environments.

\textit{Python interface}\;
Once distributed execution is in place, the software will be packaged with a Python interface (and potentially interfaces for other languages) to provide a user experience comparable to that of modern gradient-based learning frameworks.  In such an interface, the primary optimization diagnostic would be the \emph{gap} rather than a loss value, the RRR time step $\beta$  plays the role of the learning rate, and the metric relaxation hyperparameter $\gamma$ is roughly analogous to the moment decay rates of the \texttt{Adam} optimizer.  Aside from these differences, most user-facing concepts—such as network architectures, batch size, training schedules, and accuracy metrics—would remain closely aligned with standard machine learning practice.

\subsection{Infrastructure}

We anticipate the development of the following tools to become part of the BTF-learning ecosystem:

\textit{Model interpreters}\; As explained above, a BTF network trained with a large margin, or equivalently a small $\sigma$, is equivalent to a BTF network where all nonzero weights are $\pm 1$. Software that inputs a continuous-weight model and outputs an equivalent model with $\pm 1$ weights is a model interpreter. As explained in connection with the logic circuit reconstructions at the end of Section \ref{sec:overview}, such an interpreter faces a mild complication when the BTF inputs have exact correlations. In that case, interpreters not only have to identify a BTF's relevant inputs (Section \ref{sec:BTF}) but be able to detect correlations in their values.

\textit{Word embedding}\; As already explained in Section \ref{sec:randlogic}, \texttt{word2bits} makes a lot more sense than \texttt{word2vec} when the representation is Boolean. Most of the strategies behind \texttt{word2vec} carry over to embeddings on the Boolean hypercube.

\textit{Benchmarks for theory}\; Benchmark data sets are important for keeping machine learning developments in line with the needs of industry. It would be useful to additionally have benchmarks designed to resolve theoretical questions that arise in Boolean representations. How many dimensions are best for embedding the tokens for language-like data? What is the best tradeoff between network width and depth? It's important that ``best'' is evaluated not just on the basis of accuracy but also computational efficiency.

\subsection{Theory}

In their conclusions to ``Learning
internal representations ...''\cite{rumelhart1985learning}, Rumelhart et al. quote Minsky and Papert on the challenges of establishing ``an interesting learning theorem'' for  multilayered architectures. However, they sum up their own work as follows:
\begin{quote}
Although our learning results do not \textit{guarantee} that we can find a solution for all solvable
problems, our analyses and results have shown that as a practical matter, the error propagation
scheme leads to solutions in virtually every case.
\end{quote}
This is also where things stand with Boolean threshold function networks and the very different approach we have developed for training them. Despite nonconvexity, the RRR gap is observed to vanish just as consistently, in diverse applications, as the loss in those gradient experiments 40 years ago. There may never be a rigorous ``learning theorem'' for either approach, but that should not discourage the development of theoretical analyses that at least make our experience with these methods plausible.

Much work, more than can be reviewed here, has tried to shed light on the gradient-descent process. Many of the same questions can be asked about RRR. Is learning incremental? It certainly seems that way from the quasi-monotone behavior of the gap. The interpretability of the end product of learning with BTF networks makes the study of such questions especially interesting.

\begin{acknowledgments}
The authors are grateful to Heinz Bauschke for his influence on their thinking about projection methods and for valuable feedback. V.E. thanks Chris Myers and Nick Elyu for technical support. M.K.L. acknowledges financial support from the Alexander von Humboldt Foundation at the Technical University of Munich (TUM) through May 2026 and thanks Suvrit Sra, Adrien Taylor, and Francis Bach for helpful discussions.
\end{acknowledgments}

\section*{Data availability}
Data and code supporting the findings of this study are available at \href{https://github.com/veitelser/boolearn}{\texttt{boolearn}}. 

\appendix
\section{Gradient-based baselines}
\label{sec:gd_baselines}

Backpropagation-based neural networks have long been used to model logical
structure; early work already noted that saturating sigmoids can approximate
Boolean behavior \cite{rumelhart1985learning,rumelhart1986learning}. Subsequent
work introduced mechanisms for discrete or near-discrete computation within
gradient-based training, including straight-through gradient estimators and
binary-weight networks \cite{bengio2013estimatingpropagatinggradientsstochastic,
courbariaux2015binaryconnect}, as well as differentiable logic-gate architectures
that relax discrete choices during training and discretize afterward
\cite{petersen2022deep}.

This appendix documents the gradient-based baselines used for comparison with
BTF networks trained using RRR. The purpose of these baselines is not to develop
specialized differentiable circuit learners, but to test whether a standard,
fully connected multilayer perceptron trained with widely adopted gradient-based methods can
recover the input--output behavior of our Boolean benchmarks. A public leaderboard for the benchmark is
available \cite{random-majority-circuit-learning}.

Our baseline configuration follows common practice in modern deep learning:
fully connected ReLU networks trained with \texttt{AdamW}~\cite{loshchilov2017decoupled}, using standard hyperparameters. This setup is consistent with recent empirical
studies of Boolean-function learnability that employ comparable architectures
and optimization settings \cite{nicolau2025understanding}.

In addition, we tested (i) plain \texttt{Adam}~\cite{kingma2014adam} without decoupled weight decay and
(ii) a projected-gradient variant enforcing hard row-norm and row-sparsity
constraints after each update. In our experiments, neither modification
improved test performance relative to the reported baselines, and they are
therefore not included in the main comparison.

\textit{Relation to quantization-aware training}\;
Although our method (BTF networks) uses discrete internal values, it is not a variant of
quantization-aware training (QAT). In QAT, discretization is introduced during
training via surrogate gradients—typically for deployment efficiency—while the
underlying optimization remains loss-based and continuous.
Representative examples include PACT \cite{choi2018pact}, learned step-size
quantization (LSQ) \cite{esser2020lsq}, vector-quantized VAEs
\cite{van2017vqvae}, and post-training quantization methods for transformer
models \cite{frantar2023gptq,xiao2023smoothquant,dettmers2023qlora}.
By contrast, our method imposes Boolean node values and hard threshold constraints
from the outset, and learning is formulated as a feasibility problem with
explicit margin and normalization constraints. Discreteness therefore serves as
a structural inductive bias for rule and circuit discovery, not as a compression
mechanism.

\subsection{Model class and training objective}
\noindent For each dataset we observe input–output pairs
\begin{equation*}
\mathcal{D}
=\{(x_i,y_i)\}_{i=1}^{M},
\qquad
x_i\in\{0,1\}^{d_{\mathrm{in}}},
\quad
y_i\in\{0,1\}^{d_{\mathrm{out}}}.
\end{equation*}
For gradient-based baselines, inputs and outputs are encoded as real values in $\{0,1\}$. We fit a fully connected multilayer perceptron
$f_\theta:\mathbb{R}^{d_{\mathrm{in}}}\to\mathbb{R}^{d_{\mathrm{out}}}$
with $L$ affine layers:
\begin{equation*}
\begin{aligned}
h_0(x) &= x,\\
h_\ell(x) &= \mathrm{ReLU}\!\left(W^{(\ell)}h_{\ell-1}(x)+b^{(\ell)}\right),
\quad \ell=1,\dots,L-1,\\
f_\theta(x) &= W^{(L)}h_{L-1}(x)+b^{(L)},
\end{aligned}
\end{equation*}
where $\theta=\{W^{(\ell)},b^{(\ell)}\}_{\ell=1}^L$ and $w^{(\ell)}_j$ denotes the $j$-th row of $W^{(\ell)}$.
The final layer is linear and outputs logits; no sigmoid activation is applied inside the network.

The layer widths and depths for each task are specified in the corresponding experimental sections~\ref{sec:randlogic}, and \ref{sec:cellauto}. All hidden layers use ReLU activations. No batch normalization, dropout, or other architectural modifications are employed.

Let $N_\mathrm{data}$ denote the number of training examples. We define the
(binary) cross-entropy loss with logits (BCE) as
\begin{equation*}
\mathcal{L}_{\mathrm{BCE}}(\theta)
=
\frac{1}{N_\mathrm{data}\,d_L}
\sum_{i=1}^{N_\mathrm{data}}
\sum_{k=1}^{d_L}
\Big(\log(1+e^{z_{ik}})-y_{ik}z_{ik}\Big),
\end{equation*}
where $z_i=f_\theta(x_i)$ are the output logits of the network.
This is the average per-output-bit negative log-likelihood over the training set.

\subsection{\texttt{AdamW} and \texttt{Penalty-AdamW}}
\label{sec:AdamW_penalty}

All gradient-based baselines are trained using \texttt{AdamW}
\cite{loshchilov2017decoupled}, i.e.\ \texttt{Adam} with \emph{decoupled weight decay}.
At each optimization step $t$, we compute the full-batch gradient
$g_t=\nabla_\theta \mathcal{J}(\theta_t)$ on the \emph{entire} training set and
apply one \texttt{AdamW} update with learning rate $\eta$, moment-decay parameters
$(\beta_1,\beta_2)$, numerical stabilizer $\varepsilon$, and decoupled weight
decay coefficient $\lambda_{\mathrm{decay}}$, which is applied to all parameters in our
implementation.

\textit{AdamW baseline}\;
The \texttt{AdamW} baseline minimizes the data-fit objective
\[
\mathcal{J}(\theta)=\mathcal{L}_{\mathrm{BCE}}(\theta),
\]
where $\mathcal{L}_{\mathrm{BCE}}$ is the binary cross-entropy with logits on the
training set.

\textit{Penalty-AdamW}\;
To bias optimization toward sparse, circuit-like weight structure, we augment
the training objective with a row-norm penalty and an $\ell_1$ penalty:
\begin{equation*}
\begin{aligned}
\mathcal{J}(\theta)
&=
\mathcal{L}_{\mathrm{BCE}}(\theta)
+\lambda_{\mathrm{norm}}
\sum_{\ell=1}^{L}
\frac{1}{d_\ell}
\sum_{j=1}^{d_\ell}
\Big(\|w^{(\ell)}_j\|_2^2-d_{\ell-1}\Big)^2\\
&\qquad\qquad\quad
+\lambda_{1}
\sum_{\ell=1}^{L}
\frac{1}{d_\ell d_{\ell-1}}
\sum_{j,k}
\lvert W^{(\ell)}_{jk}\rvert.
\end{aligned}
\end{equation*}

The row-norm term penalizes deviations from a fixed reference scale,
while the $\ell_1$ term promotes sparsity within rows.
Both penalties are applied only to weight matrices (biases are not penalized).
No projection step or feasibility enforcement is used; optimization remains
entirely gradient-based.

The row-norm penalty fixes a reference scale in an otherwise positively
homogeneous ReLU network. Because ReLU satisfies
$\mathrm{ReLU}(c x)=c\,\mathrm{ReLU}(x)$ for $c>0$, the parameterization is
degenerate under layerwise rescalings that leave the network function
unchanged. We therefore impose the target $\|w^{(\ell)}_j\|_2^2=d_{\ell-1}$
as a fixed, untuned scale convention across layers and datasets.
Its role is purely to remove this degeneracy and stabilize optimization.
Together with the $\ell_1$ term, it biases rows toward a small number of
large-magnitude entries, promoting sparse, gate-like structure.
Although the choice of $d_{\ell-1}$ parallels the fan-in normalization used
in the BTF framework, it carries no Boolean interpretation here and is used
only to control weight magnitudes and enable comparison across architectures.

\begin{figure*}
\centering
\includegraphics[width=2\columnwidth]{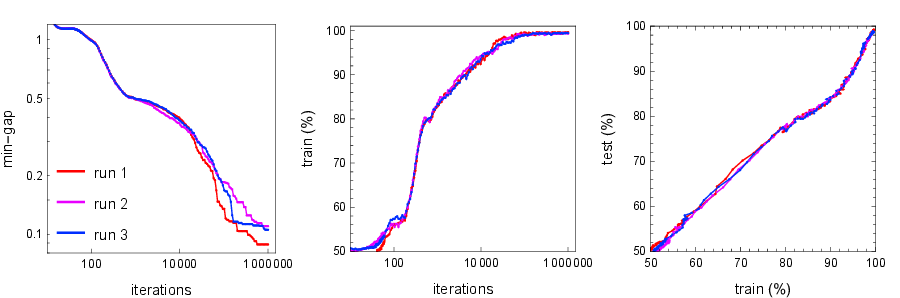}
\caption{Near reproducibility of the evolution of the minimum gap (left) and training accuracy (center) with iterations, in three runs of the RRR training algorithm differing only in the initialization. The right panel compares training and test accuracies. The same 512 data items derived from the circuit in Figure \ref{fig:randlogic} were used in all runs.}
\label{fig:initsensitivity}
\end{figure*}

\textit{Hyperparameters and initialization}\;
Unless stated otherwise, all runs use the standard \texttt{AdamW} settings commonly
adopted in modern deep learning (and also used in recent work with Boolean data \cite{nicolau2025understanding}): 
\[
\eta=10^{-3}\qquad
\beta_1=0.9\qquad
\beta_2=0.999
\]
\[
\varepsilon=10^{-8}\qquad \lambda_{\mathrm{decay}}=10^{-4}.
\]
For \texttt{Penalty-AdamW} we additionally fix
\[
\lambda_{\mathrm{norm}}=10^{-2}\qquad
\lambda_{1}=10^{-4}
\]
globally across all datasets and architectures (not tuned per task).

Weights are initialized using the
standard Kaiming uniform initialization for ReLU networks
\cite{he2015delving}. Concretely, for a layer $\ell$ with fan-in
$d_{\ell-1}$, weights and biases are sampled independently and uniformly from the interval with bounds $\pm 1/\sqrt{d_{\ell-1}}$. This initialization is applied uniformly across all datasets and
architectures without modification.

\textit{Batching, step budgets, and logging}\;
All gradient-based baselines use full-batch optimization:
each optimization step consists of one \texttt{AdamW} update using the gradient
computed on the entire training set. No data subsampling, reshuffling,
dropout, or data augmentation is employed.

Random-logic and Rule-30 (1-step) experiments are run for $10^6$
optimization steps, and Rule-30 (2-step) experiments for
$2\times 10^6$ steps. We do not use early stopping; all runs are
terminated after a fixed number of steps to ensure comparable
computational budgets across methods. Loss and accuracies are recorded at checkpoints whose indices are
geometrically spaced in iteration number (i.e., the interval between
checkpoints increases approximately exponentially). Test-set quantities are computed
only for evaluation and never enter the update rule.

\section{Initialization sensitivity}
\label{sec:initsensitivity}

A reviewer asked if training BTF networks with RRR has the same broad distribution of solution times (for different initializations) as some other highly nonconvex problems that the algorithm has been applied to. For example, packing 40 spheres into a 10-dimensional periodic cube (strategy of the densest known packing) is the proverbial needle-in-a-haystack \cite{elser2025solving}. Solution times have the same random distribution as radioactive decay. If our training method exhibited this behavior, it would be disqualified if evidence of incremental progress toward the solution was important, if only for peace of mind.

Figure \ref{fig:initsensitivity}
should answer the question. Shown are the gap, training accuracy, and generalization at 200 (logarithmically spaced) checkpoints in the Boolean function benchmark (data from the circuit in the left panel of Figure \ref{fig:randlogic}). The plots show these for three initializations (uniform samples between $-1$ and $+1$ for all the variables). Not only is the total-iteration budget for successful learning very reproducible, so is the evolution of the gap and accuracy over the entire course of training.

One should not look at Figure \ref{fig:initsensitivity} as evidence the constraints in the logic circuit benchmark are quasi-convex. The three solutions are completely different, as allowed by the symmetries of the problem (Section \ref{sec:sym}). The initialization merely selects one of very many symmetry-equivalent solutions. However, the question remains: Why does BTF-network training not exhibit needle-in-a-haystack behavior, with its broadly distributed epochs of chaotic search? This is a very good question to which we do not have an answer.

\begin{figure*}
\centering
\includegraphics[width=2\columnwidth]{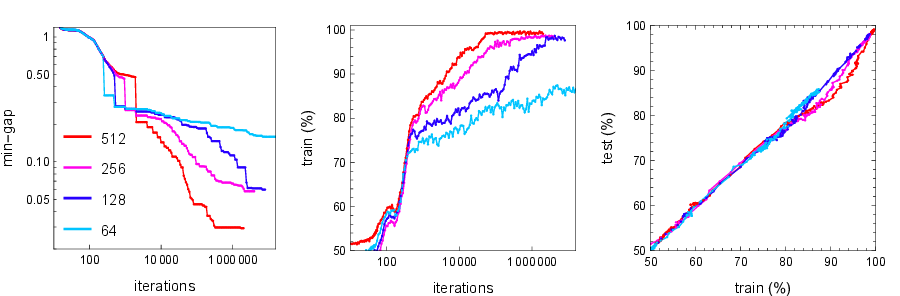}
\caption{Same as Figure \ref{fig:initsensitivity} but showing the batch-size dependence when training in batch mode. The runs with batch sizes 256, 128 and 64 were allowed 2, 4 and 8 times, respectively, as many iterations as the run in which all 512 data were processed in each iteration. All runs completed in the same amount of time (in our serial implementation) because the smaller batch sizes use less time per iteration.}
\label{fig:batch}
\end{figure*}

\section{Batch mode}
\label{sec:batchmode}

The same reviewer asked for a demonstration that our proposal in Section \ref{sec:functionality} for batch-mode training works as promised. We implemented this and also compared it to the  more obviously disruptive protocol where the entire data batch is replaced with fresh training data at regular intervals.

In both protocols the network instantiations for the new data are initialized by taking the data at the network inputs and propagating it to the outputs with the current estimate of the weights, the concurred variables $w_B$, as these are shared across instantiations. In general, this forward-pass initialization will satisfy all the constraints for its instantiation except the BTF margins and the equality of $y_A$ (from forward pass) and $y_B$ (from data) at the network's output nodes. Of course, when the correct weights have been discovered even these constraints will be satisfied with this initialization scheme. The metric coefficients of the variables for the new instantiations are initialized to 1.

Replacing whole data batches proved to be the better protocol, and we present results only for this. The batch-mode version of the training program has two new arguments. The argument \texttt{items} still specifies the number of training data, while now \texttt{batch} lets the user specify the number of these that the RRR constraint solver processes at any time. The second new argument, \texttt{batchiter}, tells RRR to perform this many iterations before replacing the current batch (of data and network instantiations) with a new one.
In batch-mode both the time per RRR iteration and the active memory are reduced by the ratio \texttt{batch}/\texttt{items}.

In order for runs to behave systematically with the setting of \texttt{batch}, we found that \texttt{batchiter} had to be proportional to \texttt{batch}. It appears, that if $N$ total iterations are invested to learn concurring weights, then the same number of training data have to be seen over the course of those $N$ iterations, regardless of how those data are divided into batches, each processed with its share of the $N$ iterations. The main downside of this strategy is that the metric coefficients have insufficient time to evolve when \texttt{batch} is small, as they are initialized to 1.

The results of the batch-mode experiments are shown in Figure \ref{fig:batch}, for \texttt{batch} settings 512, 256, 128 and 64 when training on 512 items. The iteration limit for the $\text{\texttt{batch}}=512$ run was set at $2\times 10^6$, and this was increased by factors of 2, 4 and 8 for the smaller batches so that all runs required the same total work (consumed energy). Because the run with $\texttt{batch}=512$ uses all 512 data constraints in each iteration, this only tests our initialization scheme. For this run we set $\text{\texttt{batchiter}}=2000$ and reduced this by factors of 2, 4 and 8 for the smaller batches as explained above.

Figure \ref{fig:batch} shows the same information as Figure \ref{fig:initsensitivity}, but now comparing the different batch sizes. For $\texttt{batch}=512$ we are reassured to see behavior that matches Figure \ref{fig:initsensitivity}, a sign that the forward-pass initialization scheme is working. Remembering that the plots should be compared with iterations scaled by factors of 2, 4 and 8, we see only a mild degradation in performance with batch size, except for the smallest batch. The behavior of that run may be due to the poor recovery of the metric coefficients (from initialzation to 1) when $\text{\texttt{batchiter}}=250$. The same metric relaxation parameter, $\gamma=10^{-3}$, was used in all runs.

This demonstration of batch-mode training is really only interesting for the two small batches, of size 128 and 64. As the results in Figure \ref{fig:accplots} show, a data set as small as 256 items is already sufficient for learning how to reconstruct this Boolean function. But $\texttt{items}=128$ provides insufficient information, so that batch-mode training on a larger data set is necessary.

\bibliographystyle{unsrtnat}
\bibliography{LWBTF}

\end{document}


\title{Supplemental material for Learning with Boolean threshold functions}

\begin{abstract}

\end{abstract}

\maketitle

\section{Overview}
This Supplemental Material reports additional comparisons and
robustness experiments added in response to the referees.  The common
model definitions, loss functions, initialization, and optimization
settings for the principal \texttt{AdamW} and \texttt{Penalty-AdamW} baselines are given
in Appendix~A of the main paper.  Here we describe only experimental
settings that differ from that protocol.

\section{Binary-network baselines for random-majority circuits}
\label{sec:supp-random-majority-binary-baselines}

The random-majority experiments in Section~II of the main paper
compare RRR with a real-valued multilayer perceptron trained by
backpropagation. Since the target circuits and the hidden states of
the BTF networks are Boolean, it is natural to ask whether the
observed difference is caused simply by the use of binary internal
representations.

We therefore supplement the principal real-valued ReLU baseline with
four gradient-trained variants. The first uses strict binary hidden
activations with real-valued weights and a straight-through
estimator. The second isolates equal-magnitude signed weights while
retaining continuous ReLU activations. The third combines binary
weights and binary hidden activations in a BinaryNet-style model. The
fourth replaces the hidden ReLU activation by the smooth
threshold-like function $\tanh$. These experiments separate the
effects of strict binary internal states, equal-magnitude signed
weights, and smooth thresholding.

The threshold and binary-network experiments use full-batch updates and run for \(10^6\) iterations. We report results at the final iterate, without using the test set for model selection or early stopping. In a separate experiment with $256$ training examples (Subsection~\ref{sec:supp-random-majority-minibatch}), we compare full-batch updates with cyclic and randomly reshuffled mini-batch updates.

\subsection{Threshold and binary-network baselines for random-majority circuits}

\subsubsection{Threshold and binary gradient models}

Binary activation represents the True/False values manipulated by a
logical computation. Logic imposes additional structure, however:
in symmetric unweighted gates such as \textsc{Maj}, \textsc{And},
and \textsc{Or}, the participating literals contribute with equal
strength. This motivates equal-magnitude signed weights, where the
weight sign determines whether an input is used directly or negated.

To make the role of the threshold explicit, consider the bipolar
encoding $x_i,y\in\{-1,+1\}$ and the threshold neuron
\begin{equation}
\label{eq:supp-logic-threshold}
    y
    =
    \operatorname{sgn}
    \left(
        \alpha\sum_{i=1}^{n}s_i x_i+b
    \right),~~
    s_i\in\{-1,+1\},
    ~\alpha>0.
\end{equation}
For unnegated inputs, the choices
\begin{align}
    b_{\textsc{And}_n}
        &=-(n-1)\alpha,\label{eq:supp-logic-biases1}\\
    b_{\textsc{Or}_n}
        &=(n-1)\alpha,\label{eq:supp-logic-biases2}\\
    b_{\textsc{Maj}_n}
        &=0
        ~ \text{for odd }n\label{eq:supp-logic-biases3}
\end{align}
implement the corresponding gates. In particular, for two-input
\textsc{And} and \textsc{Or}, the bias has the same magnitude as
each participating weight.

Under the diversity conditions of the BTF support result, the setting
$\sigma=3$ yields effective weight rows having support one or three.
Support one implements \textsc{Copy} or \textsc{Not}, while the
three-input equality case has equal-magnitude nonzero weights and
implements \textsc{Maj}, or \textsc{And}/\textsc{Or} when the fixed
input participates. Thus, equal-magnitude signed weights reproduce
one important feature of logical gates, but the number of nonzero
weights is also essential.

Standard binary-network methods generally retain real-valued latent
parameters and use straight-through derivatives for binary weights or
activations. Their effective activation thresholds may also remain
real-valued, for example through affine biases or the learned scale
and offset of batch normalization
\cite{courbariaux2016bnn,umuroglu2017finn}. Consequently, the dense
binary-weight models studied below reproduce the equal-magnitude
signed-weight property, but not the support-one or support-three
restriction or the gate-specific threshold relations of the BTF
model. We therefore treat them as established nearby baselines rather
than exact gradient implementations of the BTF hypothesis class.

Throughout the experiments below, the binary sign function uses the
convention
\begin{align*}
    \operatorname{sgn}(u)
    =
    \begin{cases}
        +1, & u\geq 0,\\
        -1, & u<0.
    \end{cases}
\end{align*}

\paragraph{Real weights and binary hidden activations.}

The first model uses real-valued weights and biases, but replaces the
activation at every hidden layer $\ell=1,\ldots,L-1$ by the binary
sign function,
\begin{equation}
\label{eq:supp-strict-ste-sign}
    h^{(\ell)}_j
    =
    \operatorname{sgn}
(a^{(\ell)}_j)
    \in\{-1,+1\}.
\end{equation}
The output layer remains real-valued and produces unrestricted logits.

Because the derivative of the sign function vanishes away from its
discontinuity, the backward pass uses the saturated straight-through
estimator
\begin{equation}
\label{eq:supp-ste-derivative}
    \frac{\partial h^{(\ell)}_j}
         {\partial a^{(\ell)}_j}
    \approx
    \mathbf{1}
    \left\{
        \left|a^{(\ell)}_j\right|\leq 1
    \right\}.
\end{equation}
This clipped estimator corresponds to backpropagating through the
derivative of the hard-tanh surrogate
$\operatorname{clip}(u,-1,1)$ and is commonly used in binarized
neural networks \cite{courbariaux2016bnn}. Gradients are passed
unchanged when $a^{(\ell)}_j\in[-1,1]$ and suppressed outside this
interval. Batch normalization is not used in this model.

\paragraph{Binary-weight ablation with ReLU activations.}

The second model isolates the effect of equal-magnitude signed weights
while keeping the hidden activations continuous. This property is
relevant to logic because the participating inputs of gates such as
\textsc{Maj}, \textsc{And}, and \textsc{Or} enter with equal
strength. The model does not, however, impose strict binary hidden
states or the support restriction of the BTF network.

For each layer $\ell$, let $\widetilde W^{(\ell)}$ denote the
real-valued latent weight matrix stored and updated by the optimizer.
During every forward pass, including both training and evaluation,
the latent matrix is replaced by
\begin{equation}
\label{eq:supp-binary-forward-weight}
    W^{(\ell)}
    =
    \frac{g_\ell}{\sqrt{d_{\ell-1}}}
    \operatorname{sgn}
    \left(\widetilde W^{(\ell)}\right),
\end{equation}
where $d_{\ell-1}$ is the fan-in of the layer. The gain is
$g_\ell=\sqrt{2}$ for the four hidden ReLU layers and $g_L=1$ for
the output layer. Since $d_{\ell-1}=32$ throughout the
random-majority architecture, the magnitude of every hidden-layer
forward weight is
\begin{align*}
    \frac{\sqrt{2}}{\sqrt{32}}
    =
    \frac{1}{4},
\end{align*}
while the magnitude of every output-layer forward weight is
$1/\sqrt{32}$. Real-valued biases are retained in all five affine
layers.

The derivative of the weight-binarization map is replaced by the
straight-through estimator in
\eqref{eq:supp-ste-derivative}. After each \texttt{AdamW} update,
every latent weight is restricted to the interval $[-1,1]$:
\begin{equation}
\label{eq:supp-latent-weight-clipping}
    \widetilde W^{(\ell)}_{jk}
    \leftarrow
    \max
    \left\{
        -1,\,
        \min
        \left\{
            1,\widetilde W^{(\ell)}_{jk}
        \right\}
    \right\}.
\end{equation}
The first four layers use real-valued ReLU activations, while the
final affine layer produces unrestricted real-valued logits.

The forward-pass binary weight matrices are dense by construction.
Consequently, this model does not reproduce the support-one or
support-three structure associated with the BTF setting $\sigma=3$.
It isolates the effect of equal-magnitude signed weights while leaving
the hidden representations continuous.

\paragraph{Binary weights and binary hidden activations.}

The third model combines binary forward-pass weights with binary
hidden activations. The four hidden affine transformations do not
include separate bias parameters, because each is immediately
followed by batch normalization. For
$\ell=1,\ldots,L-1$, the hidden computation is
\begin{align}
    a^{(\ell)}
        &=
        W^{(\ell)}h^{(\ell-1)},                         \\
    \widetilde a^{(\ell)}
        &=
        \operatorname{BN}^{(\ell)}
        (a^{(\ell)}),                        \\
    h^{(\ell)}
        &=
        \operatorname{sgn}
        (\widetilde a^{(\ell)}).
\end{align}
The binary weights are given by
\begin{equation}
\label{eq:supp-binarynet-forward-weight}
    W^{(\ell)}
    =
    \frac{1}{\sqrt{d_{\ell-1}}}
    \operatorname{sgn}
    \left(\widetilde W^{(\ell)}\right).
\end{equation}
For the width-$32$ networks considered here, every forward-pass
weight therefore has magnitude $1/\sqrt{32}$.

Each batch-normalization operation has one real-valued scale and one
real-valued offset parameter for every hidden unit. The scales are
initialized to $1$, the offsets to $0$, and both are trainable
parameters updated by \texttt{AdamW}. The backward pass through the
sign function uses \eqref{eq:supp-ste-derivative}, and the latent
weights are clipped to $[-1,1]$ after each update as in
\eqref{eq:supp-latent-weight-clipping}.

In the final layer, batch normalization and the sign operation are
omitted. The output layer uses binary forward-pass weights together
with a real-valued output bias and produces unrestricted real-valued
logits. We refer to this model as BinaryNet-style because its
forward-pass weights and hidden activations are binary, while the
latent weights, batch-normalization parameters, output bias, and
surrogate derivatives used during optimization remain real-valued.

Although the hidden affine transformations do not contain explicit
bias parameters, batch normalization introduces learned real-valued
effective thresholds through its scale and offset parameters and its
normalization statistics. These thresholds are not constrained to
the gate-specific relations in \eqref{eq:supp-logic-biases1}-\eqref{eq:supp-logic-biases3}. Moreover,
the binary weight rows remain dense rather than being restricted to
support one or three. This construction follows standard
BinaryNet-style practice \cite{courbariaux2016bnn,umuroglu2017finn} but does not impose the exact logical
structure of the BTF model.

\paragraph{Smooth tanh activations.}

As a smooth relaxation of the BTF activation, the fourth model uses
the hyperbolic tangent in place of ReLU at every hidden layer:
\begin{equation}
\label{eq:supp-tanh-activation}
    h^{(\ell)}
    =
    \tanh
    (a^{(\ell)}),
    \qquad
    \ell=1,\ldots,L-1.
\end{equation}
The weights and biases remain real-valued, and the exact derivative of
$\tanh$ is used during backpropagation. The final layer is linear and
produces unrestricted real-valued logits. Thus, this model differs
from the principal real-valued baseline only through its hidden
activation function. Predictions are obtained by thresholding the
output logits at zero.

\subsubsection{Binary and smooth-threshold network results}

Table~\ref{tab:supp-random-majority-binary-baselines} reports the
terminal train and test bit accuracies.

\begin{table}[t]
\centering
\caption{
Gradient-trained threshold and binary-network baselines on the
random-majority datasets. 
}
\label{tab:supp-random-majority-binary-baselines}
\begin{tabular}{
    l
    S[table-format=3.0]
    S[table-format=3.2]
    S[table-format=2.2]
}
\toprule
Method
& {$N_\mathrm{train}$}
& {Train}
& {Test}
\\
\midrule

\multirow{4}{*}{STE activations}
& 128 & 99.46 & 69.21 \\
& 256 & 97.22 & 72.53 \\
& 384 & 94.79 & 76.06 \\
& 512 & 96.90 & 84.83 \\
\midrule

\multirow{4}{*}{Binary weights + ReLU}
& 128 & 87.13 & 69.45 \\
& 256 & 79.37 & 68.89 \\
& 384 & 72.21 & 66.78 \\
& 512 & 76.32 & 71.47 \\
\midrule

\multirow{4}{*}{Full BinaryNet-style}
& 128 & 77.22 & 66.27 \\
& 256 & 73.88 & 67.55 \\
& 384 & 69.86 & 66.62 \\
& 512 & 69.09 & 66.36 \\
\midrule

\multirow{4}{*}{Tanh activations}
& 128 & 100.00 & 72.67 \\
& 256 & 100.00 & 72.70 \\
& 384 & 100.00 & 73.51 \\
& 512 & 100.00 & 78.60 \\
\bottomrule
\end{tabular}
\end{table}

The network with real-valued weights and binary STE activations does
not interpolate every training set under the tested budget, but its
held-out performance improves substantially as the number of examples
increases. Its test bit accuracy rises from $69.21\%$ at
$N_\mathrm{data}=128$ to $84.83\%$ at
$N_\mathrm{data}=512$. It is the strongest of the four variants at
the largest training-set size.

The binary-weight/ReLU ablation isolates the effect of
equal-magnitude signed weights while retaining continuous hidden
representations. Its training accuracy remains between $72.21\%$ and
$87.13\%$, while its test accuracy lies between $66.78\%$ and
$71.47\%$. Its relatively small train--test gaps therefore primarily
reflect underfitting rather than improved generalization. This result
also shows that dense equal-magnitude signed weights alone do not
reproduce the behavior of the support-constrained BTF model.

The BinaryNet-style model combines binary weights with binary hidden
activations and attains the lowest overall accuracies among the four
variants under this protocol. Its test accuracy remains between
$66.27\%$ and $67.55\%$, and increasing the number of training
examples produces no systematic improvement. Its limited training
accuracy indicates substantial underfitting, although these
experiments do not distinguish optimization difficulty from
limitations of the dense binary hypothesis class.

The tanh network interpolates every training set. Its held-out bit
accuracy increases from $72.67\%$ with $128$ training examples to
$78.60\%$ with $512$ examples. It performs similarly to the
activation-only STE model at $N_\mathrm{data}=256$, but is weaker at
$N_\mathrm{data}=384$ and $512$. Thus, the natural smooth relaxation
fits the observed input--output pairs without attaining the
held-out performance of the activation-only STE model at the larger
training-set sizes.

These experiments separate three properties relevant to logical
networks: strict binary hidden states, equal-magnitude signed weights,
and smooth threshold-like activation. Overall, the two models with
real-valued weights---the activation-only STE model and the tanh
model---provide the strongest gradient-trained results, particularly
at the larger training-set sizes. Nevertheless, under this one-seed,
fixed-budget protocol, none attains the near-perfect held-out
performance obtained by the constraint-trained BTF networks, and none
achieves exact recovery of the target input--output map on the
held-out examples.

\subsubsection{Effect of mini-batch training}
\label{sec:supp-random-majority-minibatch}

The binary-network comparisons above use full-batch updates. We
separately test whether the behavior of the real-valued gradient
baseline arises from processing all training examples in every
update.

The experiment uses $256$ training examples, the
real-valued multilayer perceptron from Appendix~A, five matched
initializations, and a budget of $10^6$ AdamW updates. We compare the
full batch with cyclic and randomly reshuffled mini-batches of sizes
$$
    B\in\{16,32,64,128\}.
$$
In the cyclic schedule, a fixed seed-dependent ordering is repeated
throughout training. In the reshuffled schedule, a new ordering is
sampled after every pass through the training set. We additionally
consider a fixed-subset condition in which the same subset of size
$B$ is used in every update.

The full-batch model reaches $100\%$ median training-bit accuracy and
$74.32\%$ median test-bit accuracy. The cyclic schedules also reach
$100\%$ median accuracy on the complete training pool, with test
accuracies
$$
    74.34\%,\quad
    73.85\%,\quad
    73.97\%,\quad
    73.86\%
$$
for $B=16,32,64,128$, respectively. The corresponding reshuffled
results are
$$
    74.53\%,\quad
    73.65\%,\quad
    73.93\%,\quad
    73.87\%.
$$
Thus, batches as small as $16$ are sufficient to fit the complete
training pool when the sequence of batches eventually presents all
$256$ examples. Random reshuffling provides no clear improvement over
the cyclic schedule.

The fixed-subset experiment gives a different result. Although each
model fits its repeatedly presented subset with $100\%$ bitwise
accuracy, its accuracy on the complete training pool is
$$
    61.69\%,\quad
    68.10\%,\quad
    76.33\%,\quad
    86.19\%
$$
for $B=16,32,64,128$, respectively. The corresponding test accuracies
are
$$
    59.48\%,\quad
    63.43\%,\quad
    68.78\%,\quad
    72.20\%.
$$

The mini-batch results separate the number of examples used in a
single update from the total information presented during training.
Gradient optimization can accumulate information across successive
small batches and interpolate the complete training set. A fixed
subset of the same size, however, is insufficient to determine the
target mapping.

Mini-batching does not materially improve held-out performance.
Across the full, cyclic, and reshuffled conditions, median test-bit
accuracy remains between $73.65\%$ and $74.53\%$, while median
exact-output accuracy remains below $0.25\%$. On this benchmark, the
generalization behavior of the real-valued gradient baseline is
therefore not explained by its use of full-batch updates.

\section{Soft penalties and hard projections in gradient training}
\label{sec:supp-penalty-projected-adamw}

Appendix A reports \texttt{Penalty-AdamW}, which introduces row-norm and
sparsity biases through additional terms in the training objective.
We also tested a \texttt{Projected-AdamW} variant that imposes analogous
conditions directly on the weights after every optimizer update. 

\subsection{\texttt{Projected-AdamW}}

\texttt{Projected-AdamW} first performs an \texttt{AdamW} update on the binary
cross-entropy objective. Each row of every weight matrix is then projected onto
\begin{equation}
\label{eq:supp-projected-adamw-set}
    \mathcal C_{d,k}
    =
    \left\{
        w\in\mathbb R^d:
        \|w\|_0\leq k,
        \quad
        \|w\|_2^2=d
    \right\},
\end{equation}
where $d$ is the fan-in of the layer. We use $k=3$.

For a row $v\in\mathbb R^d$, let $\mathcal T_k(v)$ retain its $k$
largest-magnitude entries and set the remaining entries to zero. For
$\mathcal T_k(v)\neq0$, the update is
\begin{equation}
\label{eq:supp-projected-adamw-map}
    P_{\mathcal C_{d,k}}(v)
    =
    \sqrt{d}\,
    \frac{\mathcal T_k(v)}
         {\|\mathcal T_k(v)\|_2}.
\end{equation}
The implementation clamps the squared norm in the denominator below
by $10^{-12}$ for numerical stability. Bias parameters are not
projected. The projection is applied to every affine layer, including
the output layer.

Unlike \texttt{Penalty-AdamW}, \texttt{Projected-AdamW} does not permit temporarily
dense or incorrectly normalized rows. Support selection is therefore
part of the optimization throughout training: the support of a row
changes only when an excluded entry overtakes an included entry in
magnitude.

\subsection{One-step Rule-30 results}

We evaluated both methods on the one-step Rule-30 problem using the
architecture
$$
    16\longrightarrow32\longrightarrow32\longrightarrow16.
$$
Each run uses full-batch optimization, seed $0$, and $10^6$ updates.
\texttt{Penalty-AdamW} uses learning rate $10^{-3}$, as in Appendix~A, whereas
\texttt{Projected-AdamW} uses learning rate $3\times10^{-4}$. Both use weight
decay $10^{-4}$.

Table~\ref{tab:supp-rule30-penalty-projected} reports terminal
bitwise and exact-output accuracies.

\begin{table}[t]
\centering
\caption{
\texttt{Penalty-AdamW} and \texttt{Projected-AdamW} on one-step Rule 30. 
}
\label{tab:supp-rule30-penalty-projected}
\begin{tabular}{
    l r
    S[table-format=3.2]
    S[table-format=3.2]
}
\toprule
Method
& {$N_\mathrm{data}$}
& {Train bit}
& {Test bit}
\\
\midrule

\multirow{4}{*}{\texttt{Penalty-AdamW}}
& 64  & 100.00 & 89.00  \\
& 96  & 100.00 & 98.76  \\
& 128 & 100.00 & 98.98  \\
& 160 & 100.00 & 100.00 \\
\midrule

\multirow{4}{*}{\texttt{Projected-AdamW}}
& 64  & 62.50 & 52.06 \\
& 96  & 66.21 & 54.86 \\
& 128 & 63.33 & 55.70 \\
& 160 & 60.82 & 55.26 \\
\bottomrule
\end{tabular}
\end{table}

\texttt{Penalty-AdamW} fits every training set exactly. Its test-bit accuracy
increases from $89\%$ at $N_\mathrm{data}=64$ to $100\%$ at
$N_\mathrm{data}=160$, while exact-output accuracy increases from
$18\%$ to $100\%$.

\texttt{Projected-AdamW} does not fit the training data under the tested
protocol. Its terminal training-bit accuracy remains between
$60.8\%$ and $66.2\%$, test-bit accuracy remains between
$52\%$ and $55.7\%$, and exact-output accuracy is zero in every
condition.

Thus, the successful sparsity-enhanced result for one-step Rule 30
does not arise merely from constraining every weight row to be
three-sparse and normalized. The soft formulation permits the model
to fit the observed mapping while the structural penalties continue
to reorganize the weights. The hard projection instead restricts the
available weight geometry after every update and, in this experiment,
prevents even interpolation of the training set.

\subsection{Task dependence}

The effect of \texttt{Penalty-AdamW} depends strongly on the structure of the
target problem. On the random-majority circuits, the penalties do not
produce the transition to near-perfect generalization obtained by
RRR. These tasks require a globally consistent composition of hidden
logical operations across five layers, and a local row-wise sparsity
bias does not identify that complete computation.

For one-step Rule 30, by contrast, each output depends on a small
spatial neighborhood. The sparsity bias is well aligned with this
local structure, and \texttt{Penalty-AdamW} eventually discovers a
generalizing representation. The advantage weakens for two-step Rule
30, where the dependency radius and compositional depth are larger;
under the protocol of the main paper, \texttt{Penalty-AdamW} struggles even to
fit the training data perfectly.

\texttt{Projected-AdamW} also performed comparably poorly on the two-step Rule-30 and random-majority circuit datasets; therefore, we report only its one-step Rule-30 results in the table.

\section{Exact synthesis of the two-bit multiplier}
\label{sec:supp-exact-multiplier}

The multiplier experiment in the Section VI.A demonstrates that training
an initially generic Boolean-threshold network can reveal a compact
and interpretable circuit from the complete multiplication table.
Since the problem has only four input bits, it is also small enough to
permit exact Boolean-circuit synthesis by satisfiability.

For comparison with the learned network, we restrict the
SAT gate library to two-input Boolean functions that are themselves
linearly separable. The four product bits are synthesized jointly in a
single circuit, so intermediate gates may be shared between outputs.
By solving SAT instances with successively larger gate budgets, we
construct a valid shared circuit and certify that no smaller circuit
exists within the prescribed gate library and circuit model.

\subsection{Multiplier specification}

Write the two factors as
\begin{equation}
    A=a_0+2a_1,
    \qquad
    B=b_0+2b_1,
\end{equation}
where
\begin{align*}
    a_0,a_1,b_0,b_1\in\{0,1\}.
\end{align*}
The four product bits are defined by
\begin{equation}
\label{eq:supp-multiplier-definition}
    AB
    =
    p_0+2p_1+4p_2+8p_3.
\end{equation}
The specification is the complete multiplication table containing all
\begin{align*}
    2^4=16
\end{align*}
assignments of the four input bits.

\subsection{Logic-matched gate library}

Consider the bipolar encoding
\begin{align*}
    0\equiv -1,
    \qquad
    1\equiv +1.
\end{align*}
A two-input gate is selected from the signed-majority family
\begin{equation*}
\label{eq:supp-signed-majority-family}
    f(x_1,x_2)
    =
    s_0\,
    \operatorname{Maj}
    \bigl(
        s_c,\,
        s_1x_1,\,
        s_2x_2
    \bigr),
\end{equation*}
where  $s_0,s_c,s_1,s_2\in\{-1,+1\}$. Each member of this family is a Boolean threshold function and is
therefore linearly separable. There are $2^4=16$ signed
parameterizations. Since
\begin{align*}
    -\operatorname{Maj}(u,v,w)
    =
    \operatorname{Maj}(-u,-v,-w),
\end{align*}
the parameterizations occur in equivalent pairs and induce eight
distinct truth tables. These eight functions are precisely
\begin{equation*}
\label{eq:supp-sat-matched-library}
    \mathcal G_{\mathrm{logic}}
    =
    \left\{
        \ell_1\land\ell_2,\,
        \ell_1\lor\ell_2
        \;:\;
        \ell_i\in\{x_i,\neg x_i\}
    \right\}.
\end{equation*}
Explicitly, the library contains
\begin{align*}
    &x_1\land x_2,
    &&x_1\land\neg x_2,
    &&\neg x_1\land x_2,
    &&\neg x_1\land\neg x_2,
    \notag\\
    &x_1\lor x_2,
    &&x_1\lor\neg x_2,
    &&\neg x_1\lor x_2,
    &&\neg x_1\lor\neg x_2.
\end{align*}
In particular, \textsc{Xor} is not available as a primitive gate. For compactness, we use the notation
\begin{align*}
    \operatorname{AND}_{++}(u,v)
        &=u\land v,
    &
    \operatorname{AND}_{+-}(u,v)
        &=u\land\neg v,
    \notag\\
    \operatorname{AND}_{-+}(u,v)
        &=\neg u\land v,
    &
    \operatorname{AND}_{--}(u,v)
        &=\neg u\land\neg v,
    \label{eq:supp-signed-and-notation}
\end{align*}
and analogously
\begin{align*}
    \operatorname{OR}_{++}(u,v)
        &=u\lor v,
    &
    \operatorname{OR}_{+-}(u,v)
        &=u\lor\neg v,
    \notag\\
    \operatorname{OR}_{-+}(u,v)
        &=\neg u\lor v,
    &
    \operatorname{OR}_{--}(u,v)
        &=\neg u\lor\neg v.
\end{align*}

\subsection{Joint circuit model and SAT encoding}

A circuit with $G$ gates is represented by a topologically ordered
sequence
\begin{align*}
    g_0,g_1,\ldots,g_{G-1}.
\end{align*}
Each input of $g_j$ independently selects one signal from the four
primary inputs or the earlier gates
\begin{align*}
    g_0,\ldots,g_{j-1}.
\end{align*}
This ordering enforces acyclicity without prescribing a layered
architecture or a maximum circuit depth. Fan-out is unrestricted, so
an intermediate gate may feed multiple later gates or product outputs.

For a fixed gate budget $G$, Boolean variables encode the type and the
two input sources of every gate. Additional variables encode the value
of every primary input, selected gate input, and gate output on each of
the $16$ truth-table rows. Pairwise exactly-one constraints select one
gate operation and one source for each gate input.

Semantic clauses impose the selected operation separately on every
truth-table row. For example, if $g_j$ is selected to be an
$\operatorname{AND}_{+-}$ gate with inputs $u$ and $v$, then
\begin{equation}
    g_j^{(t)}
    \longleftrightarrow
    \bigl(
        u^{(t)}\land\neg v^{(t)}
    \bigr)
\end{equation}
is imposed for every row $t$. Analogous truth-table clauses implement
the other seven gate types.

All four product bits are included in the same SAT instance. Each
output independently selects a primary input or gate output, and the
selected signal is constrained to agree with the corresponding
multiplier bit:
\begin{equation}
\label{eq:supp-output-constraints}
    \widehat p_r^{(t)}
    =
    p_r^{(t)},
    \qquad
    r=0,1,2,3,
    \qquad
    t=1,\ldots,16.
\end{equation}
Because the four output selectors share the same pool of gates, the
encoding permits intermediate computations to be reused across
outputs.

The gate budget is increased iteratively, beginning with $G=0$. The
first satisfiable budget provides a constructive circuit, while the
preceding unsatisfiable instances certify minimality within
$\mathcal G_{\mathrm{logic}}$, the encoded straight-line circuit
model, and the unit gate-cost measure.

The encoding was implemented using the PySAT toolkit
\cite{imms-sat18}, with \texttt{python-sat} version
\texttt{1.9.dev5} and the \texttt{pysat.solvers.Minisat22} interface
to MiniSat \cite{een2004extensible}. Every decoded circuit was
evaluated independently on all $16$ rows of the multiplier truth
table.

\subsection{Minimum shared circuit}
\label{sec:supp-matched-sat-multiplier}

The instances with gate budgets
\begin{align*}
    G=0,\ldots,7
\end{align*}
were unsatisfiable, whereas the instance with
\begin{align*}
    G=8
\end{align*}
was satisfiable. Consequently,
\begin{equation}
\label{eq:supp-matched-shared-minimum}
    G_{\mathrm{logic}}^\star=8
\end{equation}
is the minimum number of gates within the logic-matched library and
the encoded joint circuit model.

\begin{table}[t]
\centering
\caption{
Joint exact synthesis of the two-bit multiplier with the logic-matched
gate library $\mathcal G_{\mathrm{logic}}$. Intermediate gates may be
shared between outputs. Variables and clauses refer to the first
satisfiable instance.
}
\label{tab:supp-joint-multiplier-matched}
\begin{tabular}{@{}lrrr@{}}
\toprule
Synthesis
& Minimum gates
& Variables
& Clauses
\\
\midrule
Four outputs, shared
& $8$
& $680$
& $9716$
\\
\bottomrule
\end{tabular}
\end{table}

One minimum circuit returned by the solver is
\begin{align}
    g_0 &= \operatorname{AND}_{++}(b_0,a_1),
        \notag\\
    g_1 &= \operatorname{AND}_{++}(b_1,a_1),
        \notag\\
    g_2 &= \operatorname{AND}_{++}(b_0,a_0),
        \notag\\
    g_3 &= \operatorname{AND}_{++}(a_0,b_1),
        \notag\\
    g_4 &= \operatorname{OR}_{++}(g_3,g_0),
        \notag\\
    g_5 &= \operatorname{AND}_{+-}(g_1,g_2),
        \notag\\
    g_6 &= \operatorname{AND}_{++}(g_1,g_2),
        \notag\\
    g_7 &= \operatorname{AND}_{+-}(g_4,g_6),
    \label{eq:supp-matched-shared-circuit}
\end{align}
with outputs
\begin{equation}
\label{eq:supp-matched-shared-outputs}
    p_0=g_2,
    \qquad
    p_1=g_7,
    \qquad
    p_2=g_5,
    \qquad
    p_3=g_6.
\end{equation}

The circuit has a direct partial-product interpretation. First,
\begin{align*}
    g_0=a_1b_0,
    \qquad
    g_3=a_0b_1,
\end{align*}
so
\begin{align*}
    g_4=(a_1b_0)\lor(a_0b_1).
\end{align*}
Moreover,
\begin{align*}
    g_1=a_1b_1,
    \qquad
    g_2=a_0b_0,
\end{align*}
and hence
\begin{align*}
    g_6=g_1g_2
       =a_0a_1b_0b_1
       =(a_1b_0)\land(a_0b_1).
\end{align*}
It follows that
\begin{align}
    p_1
    =g_7
    &=g_4\land\neg g_6
    \notag\\
    &=
    \bigl((a_1b_0)\lor(a_0b_1)\bigr)
    \land
    \neg\bigl((a_1b_0)\land(a_0b_1)\bigr)
    \notag\\
    &=(a_1b_0)\oplus(a_0b_1),
\end{align}
although \textsc{Xor} is not used as a primitive gate.

The remaining outputs are
\begin{align*}
    p_0=g_2=a_0b_0,
\end{align*}
\begin{align*}
    p_2=g_5
       =(a_1b_1)\land\neg(a_0b_0),
\end{align*}
and
\begin{align*}
    p_3=g_6=a_0a_1b_0b_1.
\end{align*}
Direct evaluation verifies
\begin{equation}
    (
        \widehat p_0,
        \widehat p_1,
        \widehat p_2,
        \widehat p_3
    )
    =
    (
        p_0,p_1,p_2,p_3
    )
\end{equation}
for all $16$ input assignments.

\subsection{Comparison with the circuit learned by \texttt{boolearn}}

Joint exact synthesis over $\mathcal G_{\mathrm{logic}}$ gives a
minimum shared circuit with eight gates. The circuit extracted from
the \texttt{boolearn} solution contains nine BTF
nodes. Thus, when the SAT primitives are restricted to the two-input
linearly separable operations associated with the learned circuit,
the learned realization is one node larger than the exact
shared-circuit minimum.

The comparison nevertheless remains representation dependent. SAT
searches directly over discrete gate identities and unrestricted
acyclic wiring, with circuit size fixed explicitly in each instance.
The \texttt{boolearn} experiment begins with an overparameterized
layered BTF network and searches for feasibility under margin,
normalization, and concurrence constraints. Neither the sparse wiring
nor the identities of the logical operations are supplied, and gate
count is not an explicit optimization objective.

The significance of the nine-node result is therefore not that the
training procedure solves a minimum-circuit problem. Rather, a compact
and interpretable multiplier circuit emerges from the projection-based
training problem and lies within one logical element of the exact
minimum obtained by specialized joint synthesis over a matched
linearly separable gate library.

\section{Binary $k$-modes comparison for the MNIST-4 archetypes}
\label{sec:supp-mnist4-kmodes}

The MNIST-4 experiment in Section VI.C produces $16$ binary
archetypes from $1\,024$ binarized images. To provide a task-specific
clustering comparison for this application, we use binary $k$-modes.
This method partitions binary images under Hamming distance and
represents each cluster by its coordinate-wise binary mode.

The comparison distinguishes between the quality of the resulting
partition and the mechanism by which the archetypes are obtained.
Binary $k$-modes directly optimizes the reconstruction of the observed
images using unrestricted binary prototypes. The RRR experiment
instead requires the archetypes to arise through a constrained
Boolean-threshold network. Binary $k$-modes therefore provides a
natural reference for evaluating the reconstruction quality achieved
under the additional network constraints.

\subsection{Data and clustering objective}

The dataset contains $5\,842$ binarized images of the handwritten
digit $4$. Each image is represented as
$$
    x_i\in\{0,1\}^{256},
$$
corresponding to a $16\times16$ pixel array. The first
$N_\mathrm{data}=1\,024$ images are used to fit the prototypes,
matching the data used in the projection-based experiment. The
remaining $N_\mathrm{test}=4\,818$ images are not used for fitting or
restart selection and are used for held-out evaluation.

Let
$$
    p_1,\ldots,p_K\in\{0,1\}^{256},
    \qquad
    K=16,
$$
denote the binary prototypes. For two binary images $x$ and $p$, the
Hamming distance is
\begin{equation}
\label{eq:supp-mnist4-hamming}
    d_H(x,p)
    =
    \sum_{j=1}^{256}
    \mathbf{1}\{x_j\neq p_j\}.
\end{equation}
The training objective is the total distance to the nearest prototype:
\begin{equation}
\label{eq:supp-mnist4-objective}
    J_{\mathrm{train}}
    =
    \sum_{i=1}^{N_\mathrm{data}}
    \min_{1\leq r\leq K}
    d_H(x_i,p_r).
\end{equation}

The objective is minimized by alternating assignment and prototype
updates. For fixed prototypes, each image is assigned to a nearest
prototype. For fixed assignments, each prototype coordinate is
replaced by the majority bit among the images assigned to that
cluster. When the numbers of zeros and ones are equal, the previous
prototype value is retained. The iterations terminate when the
prototypes no longer change or after $200$ updates.

The prototypes are initialized using a Hamming-distance analogue of
$k$-means++. The first prototype is chosen uniformly from the
distinct training images, and each subsequent prototype is selected
with probability proportional to the square of its distance from the
nearest previously selected prototype. Initial prototypes are required
to be distinct. If an assignment produces an empty cluster, the
highest-error image belonging to a cluster with at least two members
is reassigned to the empty cluster before the prototype update.

\subsection{Restart and model-selection protocol}

Because alternating $k$-modes optimization is nonconvex, we perform
$50$ independent runs using seeds
$$
    0,1,\ldots,49.
$$
Every run is fitted using only the $1\,024$ training images. The
reported model is the restart attaining the smallest value of
$J_{\mathrm{train}}$; no held-out statistic is used for model
selection.

The minimum training objective was obtained for seed $19$:
\begin{equation}
\label{eq:supp-mnist4-selected-objective}
    J_{\mathrm{train}}=45\,541.
\end{equation}
This run converged after $26$ alternating updates. Its final solution
contains $16$ nonempty clusters and $16$ distinct binary prototypes.

For a collection of images $\mathcal D$, the binary $k$-modes
reconstruction accuracy is defined by
\begin{equation}
\label{eq:supp-mnist4-accuracy}
    \operatorname{Acc}(\mathcal D)
    =
    100
    \left(
        1-
        \frac{
            \sum_{x\in\mathcal D}
            \min_{1\leq r\leq K} d_H(x,p_r)
        }{
            256|\mathcal D|
        }
    \right).
\end{equation}
This is the fraction of image bits agreeing with the nearest
prototype. It is not a digit-classification accuracy, since all
images belong to the same digit class.

\subsection{Reconstruction results}

For the selected restart, binary $k$-modes achieves a reconstruction
accuracy of $82.6\%$ on the $1\,024$ fitted images. The RRR solution
partitions the same images into $16$ subsets and obtains an in-sample
reconstruction accuracy of $81\%$. Binary $k$-modes therefore
achieves approximately $1.6$ percentage points higher in-sample
reconstruction accuracy.

The difference is consistent with the objectives of the two methods.
Binary $k$-modes directly minimizes the Hamming distortion used for
evaluation, and its prototypes are unrestricted binary vectors. The
RRR archetypes must instead be generated through the
Boolean-threshold network and satisfy its architectural, threshold,
margin, and concurrence constraints.

After the binary $k$-modes restart was selected using training
performance, its prototypes were frozen and evaluated on the
remaining $4\,818$ images. The held-out Hamming objective was
\begin{equation}
\label{eq:supp-mnist4-test-objective}
    J_{\mathrm{test}}=226\,778,
\end{equation}
corresponding to a reconstruction accuracy of $81.6\%$. The
corresponding held-out reconstruction accuracy for the RRR archetypes
is $80.7\%$. Binary $k$-modes therefore achieves approximately $0.9$
percentage points higher held-out reconstruction accuracy.

The Hamming distance from a held-out image to its nearest binary
$k$-modes prototype has mean $47.1$, median $46$, and maximum $134$.
Thus, a held-out $256$-pixel image differs from its nearest learned
mode in approximately $47$ pixels on average.

The binary $k$-modes accuracy decreases from $82.6\%$ on the fitted
images to $81.6\%$ on the held-out images, a reduction of approximately
$1.0$ percentage point. The RRR accuracy decreases from $81\%$ to
$80.7\%$, a reduction of approximately $0.3$ percentage points. These
differences are descriptive results for this fixed split and are not
intended as statistically resolved differences in generalization.

\subsection{Dependence on initialization}

The held-out reconstruction accuracy of binary $k$-modes is stable
across the $50$ initializations. Its minimum, median, and maximum
values are, respectively,
$$
    81.1\%,
    \qquad
    81.6\%,
    \qquad
    81.7\%.
$$
The full range is approximately $0.6$ percentage points. The held-out
accuracy of the training-selected restart is $81.6\%$, equal to the
median at the reported precision. Its held-out performance is
therefore not associated with an unusually favorable initialization.

\subsection{Comparison}

Binary $k$-modes is a specialized baseline for the partitioning
component of the MNIST-4 experiment. On the fitted images, it achieves
$82.6\%$ reconstruction accuracy, compared with $81\%$ for RRR. On
the held-out images, the corresponding accuracies are $81.6\%$ and
$80.7\%$. Binary $k$-modes therefore gives modestly higher
reconstruction accuracy on both portions of the data.

The outputs of the two methods nevertheless have different
interpretations. A $k$-modes prototype is the coordinate-wise mode of
a cluster and need not be realizable as the output of a learned
decoder. In the projection-based experiment, the archetypes arise
from a Boolean-threshold network and are coupled to its internal
representation. The RRR solution therefore supplies both a partition
of the images and a constrained network mechanism that generates the
corresponding binary archetypes.

The comparison does not indicate that RRR improves upon an algorithm
designed specifically to minimize Hamming distortion. Rather, it
shows that the network-generated archetypes attain reconstruction
quality close to that of direct binary clustering while satisfying
the additional structural constraints imposed by the Boolean network.

\subsection{Scope of the comparison}

Only $K=16$ is considered because this matches the number of
archetypes in the projection-based experiment. Alternating
$k$-modes finds a local minimum of the clustering objective and does
not certify global optimality. Pixelwise reconstruction accuracy also
assigns substantial weight to background pixels and does not fully
measure the visual diversity or perceptual quality of the prototypes.

The training values of $82.6\%$ for binary $k$-modes and $81\%$ for
RRR are computed on the same $1\,024$ images. The held-out values of
$81.6\%$ and $80.7\%$, respectively, are computed on the same
remaining $4\,818$ images. The results compare the fraction of
reconstructed image bits under each method's induced partition; they
do not make the mechanisms used to obtain the partitions and
archetypes equivalent.

The binary $k$-modes calculation therefore provides a natural
task-specific clustering reference. Direct optimization of Hamming
distortion gives modestly higher reconstruction accuracy on both the
fitted and held-out images, whereas the projection-based method
obtains comparably representative archetypes as outputs of a learned
Boolean-threshold network.

\bibliographystyle{unsrtnat}
\bibliography{Supp}